%% file: main.tex
\PassOptionsToPackage{x11names}{xcolor}
\documentclass{article}
\usepackage{iclr2027_conference,times}
\iclrfinalcopy

\input{math_commands}

\usepackage{xcolor}
\definecolor{DarkPurple}{RGB}{68,68,89}
\definecolor{LightPurple}{RGB}{126,126,166}
\definecolor{Olive}{RGB}{109,140,100}
\definecolor{DarkerPurple}{RGB}{41,41,89}
\definecolor{keylavender}{RGB}{245,245,250}
\definecolor{LinkBlue}{HTML}{2970CC}
\usepackage[colorlinks=true,linkcolor=LinkBlue,urlcolor=LinkBlue,citecolor=LinkBlue,breaklinks,pagebackref=true]{hyperref}
\usepackage{url}
\usepackage{microtype}
\usepackage{amsmath}
\usepackage{amsthm}
\usepackage{mathtools}
\usepackage{enumitem}
\usepackage{booktabs}
\usepackage{multirow}
\usepackage{nicematrix}
\usepackage{algorithm}
\usepackage{algpseudocode}
\usepackage{xspace}
\usepackage{thmtools}
\usepackage{thm-restate}
\usepackage[most]{tcolorbox}
\usepackage{wrapfig}
\usepackage{cleveref}
\usepackage{backref}
\crefname{equation}{}{}
\crefname{appendix}{Appendix}{Appendices}
\Crefname{appendix}{Appendix}{Appendices}

\renewcommand*{\backref}[1]{}
\renewcommand*{\backrefalt}[4]{\ifcase#1\relax\or(page #2)\else(pages #2)\fi}

\tcbset{propbox/.style={
  enhanced,
  colback=black!5, frame hidden, arc=6pt,
  boxsep=0pt, left=10pt, right=10pt, top=8pt, bottom=8pt,
  before skip=10pt, after skip=10pt,
  before upper={\parskip=0.5em\relax}
}}
\declaretheoremstyle[
  spaceabove=0pt, spacebelow=0pt,
  headfont=\bfseries, notefont=\mdseries, notebraces={(}{)},
  bodyfont=\itshape, headpunct={.}, postheadspace=0.5em
]{propstyle}
\declaretheorem[name=Proposition,style=propstyle,
  preheadhook={\tcolorbox[propbox]},
  postfoothook={\endtcolorbox}]{proposition}
\declaretheorem[name=Corollary,style=propstyle,sibling=proposition,
  preheadhook={\tcolorbox[propbox]},
  postfoothook={\endtcolorbox}]{corollary}

\newcommand{\ours}{\textit{a}FMLM}

\title{Asynchronous Flow Map Language Models}

\author{Chanhyuk Lee$^1$\quad Wonjung Kim$^1$\quad Patrick Pynadath$^2$\quad Jaehoon Yoo$^1$\\
\textbf{Satoshi Hayakawa$^3$\quad Ruqi Zhang$^2$\quad Jinwoo Kim$^{1\,\dag}$\quad Seunghoon Hong$^{1\,\dag}$}\vspace{0.3em}\\
$^1$KAIST\quad $^2$Purdue University\quad $^3$The University of Tokyo\\
$^{\dag}$Equal advising
}

\begin{document}

\maketitle
\lhead{Preprint.}

\begin{abstract}
\input{abstract}
\end{abstract}

\input{sections/intro}
\input{sections/related_work}

\input{sections/background}
\input{sections/method}

\input{sections/experiments}
\input{sections/conclusion}

\bibliography{references}
\bibliographystyle{iclr2027_conference}

\newpage
\appendix
\crefalias{section}{appendix}
\crefalias{subsection}{appendix}
\crefalias{paragraph}{appendix}
\input{sections/appendix_proofs}
\input{sections/appendix_schedulers}
\input{sections/appendix_experiments}

\end{document}

%% file: math_commands.tex
\usepackage{amsmath,amsfonts,bm}

\def\1{\bm{1}}

\DeclareMathAlphabet{\mathsfit}{\encodingdefault}{\sfdefault}{m}{sl}
\SetMathAlphabet{\mathsfit}{bold}{\encodingdefault}{\sfdefault}{bx}{n}



%% file: abstract.tex
The order in which tokens are predicted in language modeling has a profound impact on the quality of generated sequences.
Autoregressive and masked diffusion models iteratively fill in tokens and can exploit such orderings across generation steps, but how to leverage an ordering for one-step parallel language modeling is unclear.
We introduce \textbf{\emph{a}synchronous flow map language models (\ours{}s)}, which incorporate prediction order into one-step modeling by learning the flow maps of asynchronous flows.
These flows specify an ordering through a tokenwise time schedule that controls the relative denoising progress of tokens.
We formulate a general class of asynchronous flows that admit both prescribed and learned, state-adaptive schedules, show that they define valid flow maps, and develop new methods for their flow map distillation.
The resulting models can generate all tokens in a single forward pass by internalizing the ordering encoded in the flow into their hidden computations.
On Sudoku solving and code generation for GSM8K math word problems, \ours{} establishes new state-of-the-art one-step performance\footnote{Code is available at \url{https://github.com/david3684/async-fmlm}.}.

%% file: sections/intro.tex
\section{Introduction}

How tokens are ordered and scheduled for prediction has a significant impact on language modeling \citep{shannon1951prediction,papadopoulos2024arrows,kim2025train,nie2026large}, as it determines how sequence prediction is decomposed into a series of simpler modeling subproblems.
For instance, autoregressive models (ARMs) \citep{bengio2003neural} encode an ordering as a prescribed left-to-right factorization, whereas masked diffusion models (MDMs) \citep{sahoo2024simple,shi2024simplified} learn over diverse masking patterns and choose orderings for generation based on model outputs.
These choices would not matter for perfect models, as any factorization recovers the true joint distribution.
In practice, however, models have finite capacity and some subproblems are harder to learn than others, making generation quality sensitive to the chosen order and schedule \citep{finzi2026entropy}.

Because these orderings and schedules are defined over generation steps, in which tokens decided earlier inform later ones, iterative models such as ARMs and MDMs can naturally use them.
In contrast, parallel language models designed for fast generation take only a few steps and must perform more of their computation in hidden space, where it is unclear how to leverage a given ordering or schedule.
This problem is most severe in one-step generation~\citep{lee2026flowmaplanguagemodels}, which predicts all tokens in a single forward pass and thus leaves no steps over which an order can be followed.

We address this problem with \textbf{\emph{a}synchronous flow map language models (\ours{}s)} that incorporate prediction order into one-step generation.
Our idea builds on flow map language models (FMLMs) \citep{lee2026flowmaplanguagemodels}, which achieve one-step generation by compressing the sampling trajectories of continuous flow language models (FLMs) \citep{lee2026flowmaplanguagemodels,hu2026elfembeddedlanguageflows,chen2026langflowcontinuousdiffusionrivals}.
While existing FMLMs use synchronous flows in which all tokens progress together, \ours{}s learn the flow maps of \textit{asynchronous flows}, in which tokenwise times control the relative progress of tokens to specify an ordering or schedule.
Building on prior work \citep{qian2026learning,bansal2026token}, we formulate a general class of asynchronous flows that admits both prescribed and learned, state-adaptive schedules.
By treating tokenwise times as part of the state, we show that these flows define valid flow maps, and develop methods to distill them.
The resulting models generate all tokens in a single forward pass, internalizing the ordering encoded in the flow into their hidden computation.

\begin{figure}
    \centering
    \includegraphics[width=\linewidth]{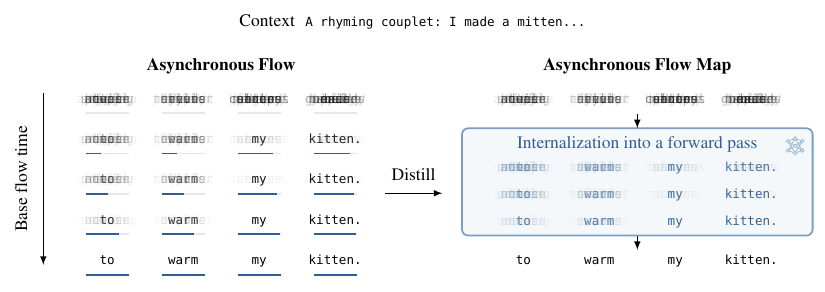}
    \caption{\textbf{Overview.} (Left) We formulate asynchronous flows that denoise tokens at different rates. To rhyme with ``mitten,'' ``kitten'' emerges early and guides the preceding words. (Right) We distill these flows into an asynchronous flow map that internalizes this ordering and generates the full sequence in one forward pass.}
    \label{fig:overview}
\end{figure}

An illustration of our approach is shown in \Cref{fig:overview}.
To summarize, our main contributions are:
\begin{itemize}[leftmargin=10pt,itemsep=2pt]
    \item \textbf{Asynchronous flows.}
    We formulate the generation of asynchronous flow language models (\emph{a}FLMs) as a coupled flow of tokens and their tokenwise times governed by a scheduler, covering prescribed and learned, state-adaptive schedules ported from MDM samplers (\Cref{sec:tokenwise_time_training,sec:state_dependent_async_flows}).
    \item \textbf{Asynchronous flow maps.}
    We show that asynchronous flows define a valid flow map and derive distillation objectives that learn \ours{}s as one-step generators, in which a single model jointly predicts how the tokenwise times and the tokens progress (\Cref{sec:joint_flow_maps,sec:implementation}).
    \item \textbf{One-step generation.}
    We train \ours{}s that achieve state-of-the-art one-step performance on Sudoku solving and coding for GSM8K math problems, and analyze their hidden computations, showing that they internalize the effect of ordering and scheduling into one forward pass (\Cref{sec:experiments}).
\end{itemize}

%% file: sections/related_work.tex
\section{Related Work}
\label{sec:related}

\paragraph{Prediction order in sequence modeling.}
Although every factorization of a joint distribution yields the same likelihood, the quality of sequence models depends on the order of prediction, a gap attributed to their computational limits \citep{finzi2026entropy}.
For ARMs, learning English in the left-to-right order leads to a higher likelihood than in reverse order \citep{shannon1951prediction,papadopoulos2024arrows}, reversing digits makes arithmetic easier to learn \citep{lee2023teaching}, and learning Sudoku solving in the order given by a logical solver outperforms the row-by-row order \citep{shah2024causal}.
In MDMs, adaptive samplers that choose the ordering from model outputs often outperform random ordering \citep{nie2026large,benhamu2025accelerated,fathi2025unifying,kim2026klass}.
MDMs with adaptive samplers have been distilled into fewer steps \citep{chen2025dparallel,kim2025cdlm,liang2026cd4lm}, but along discrete trajectories that make very few-step generation hard \citep{lee2026flowmaplanguagemodels}.

\paragraph{Asynchronous diffusion and flows.}
Asynchronous (continuous) diffusion models denoise different components at different noise levels.
Most prescribe a schedule ahead of generation, such as left-to-right for text \citep{wu2023ar,schiff2026clock,chavan2026large}, sliding window for video and fluid dynamics \citep{ruhe2024rolling}, offsets between image semantics and texture \citep{pan2025semantics,qian2026learning,baade2026latent}, or algorithmic orders for satisfiability problems \citep{bhatt2026generating}.
Diffusion forcing \citep{chen2024diffusion} uses per-token independent noise levels for training so that schedules can be chosen at inference, such as learned ones that depend on model outputs \citep{wewer2025spatialreasoningdenoisingmodels,hu2025asynchronous,schusterbauer2026denoising}.
Few-step distillation of asynchronous models, however, has been limited to prescribed schedules \citep{yin2025causvid,huang2025selfforcing,chavan2026large} or their close variants \citep{bansal2026token}, whereas we consider both prescribed and learned schedules.

\paragraph{Ordering in flow language models.}
Most continuous FLMs evolve all tokens with a single shared time \citep{lee2026flowmaplanguagemodels,chen2026langflowcontinuousdiffusionrivals,hu2026elfembeddedlanguageflows}, and their flow maps compress the resulting trajectories \citep{lee2026flowmaplanguagemodels,roos2026categorical,potaptchik2026discrete}.
Outside prescribed left-to-right ordering \citep{wu2023ar,schiff2026clock,chavan2026large}, order has been introduced into FLMs by finalizing tokens during generation, either stochastically \citep{pynadath2026candihybriddiscretecontinuousdiffusion,jutrasdube2026stickyjumpdiffusionsunifying} or once the model is confident \citep{azangulov2026masked,agarwal2026posteriorrefinementfastlanguage}.
These samplers, however, rely on stochastic updates or restarts with fresh noise, which complicates flow map distillation requiring deterministic trajectories.
\citet{rekesh2026inverse} adjust tokenwise step sizes using model outputs, which our framework can express as a learned scheduler and distill.
\citet{bansal2026token} select among a set of prescribed schedules using model outputs at the first generation step, allowing the use of flow map distillation methods for prescribed schedules.
Our approach goes beyond this workaround and handles fully learned schedules that use all intermediate outputs.

%% file: sections/background.tex
\section{Preliminaries}
\label{sec:problem}

\paragraph{Flow language models.}
Continuous flow language models \citep{lee2026flowmaplanguagemodels, hu2026elfembeddedlanguageflows, chen2026langflowcontinuousdiffusionrivals} model categorical token sequences by learning a continuous flow from noise to continuous token embeddings.
Let $V$ be a token vocabulary and ${\bf y}=({\bf y}^i)_{i=1}^L\in V^L$ a sequence.
Following \citet{lee2026flowmaplanguagemodels}, we represent each token by its one-hot encoding, ${\bf x}_1=(\mathsf{onehot}({\bf y}^i))_{i=1}^L \in\mathbb R^{L\times |V|}$.
Let $p_1$ denote the distribution of these one-hot representations and $p_0=\mathsf N(0,I)$ the noise distribution.
We define a probability path by interpolating between
independent draws ${\bf x}_0\sim p_0$ and
${\bf x}_1\sim p_1$:
\begin{equation}
    I_t\coloneqq(1-t){\bf x}_0+t{\bf x}_1, \qquad p_t \coloneqq \mathsf{Law}(I_t),\qquad t\in[0,1].
    \label{eq:sync_interpolant}
\end{equation}
The probability path admits a deterministic representation,
\begin{equation}\label{eq:sync_flow}
    \dot{\bf x}_t = b_t({\bf x}_t),\qquad b_t({\bf x}) = \mathbb{E}\left[\frac{{\rm d}}{{\rm d}t}I_t \middle| I_t={\bf x}\right] = \mathbb{E}[{\bf x}_1 - {\bf x}_0|I_t={\bf x}],
\end{equation}
where $b_t$ is the velocity field of the probability flow.
Instead of predicting the velocity itself, it is common to learn the denoiser, which predicts the posterior mean of the clean data:
\begin{equation}
    D_t({\bf x})\coloneqq \mathbb{E}[{\bf x}_1|I_t={\bf x}],\qquad b_t({\bf x}) = \frac{D_t({\bf x}) - {\bf x}}{1-t}.
    \label{eq:denoiser}
\end{equation}
Since ${\bf x}_1$ is one-hot, $D_t^i({\bf x})$ is a probability distribution over $V$ at each position $i$, and the denoiser can be learned by minimizing a tokenwise cross-entropy loss $\mathcal L_\mathsf{CE}(\hat D)\coloneqq \mathbb{E}_{t,{\bf x}_0,{\bf x}_1}\sum_i[-\log \hat D_t^i(I_t)]_{{\bf y}^i}$ \citep{dieleman2022continuous,eijkelboom2024variational,lee2026flowmaplanguagemodels}.

\paragraph{Flow map language models.}
The deterministic flow given in \Cref{eq:sync_flow} defines a unique flow map $X_{s,t}: \mathbb{R}^{L\times |V|} \rightarrow \mathbb{R}^{L\times |V|}$, the solution operator of the flow ${\bf x}_t=X_{s,t}({\bf x}_s)$, satisfying
\begin{equation} X_{s,t}({\bf x}) = {\bf x} + \int_s^t b_r(X_{s,r}({\bf x})) {\rm d}r = {\bf x} + (t-s)v_{s,t}({\bf x}), 
\end{equation} where $v_{s,t}$ is the average velocity between $s$ and $t$, satisfying $v_{s,s}=b_s$.
By learning the flow map, we can jump between arbitrary times, and evaluating it with $s=0$ and $t=1$ gives a one-step jump from noise to data.
Recent work on learning flow maps over categorical data \citep{roos2026categorical,lee2026flowmaplanguagemodels,potaptchik2026discrete} parameterizes it via the two-time denoiser $\delta_{s,t}:\mathbb{R}^{L\times |V|}\rightarrow(\Delta^{|V|-1})^L$,
\begin{equation}
    \delta_{s,t}({\bf x}) \coloneqq {\bf x} + (1-s)v_{s,t}({\bf x}),
    \label{eq:two_time_denoiser}
\end{equation}
which extrapolates the flow map displacement $s\to t$ to $s\to 1$ and satisfies $\delta_{s,s}({\bf x})=D_s({\bf x})$. 
Since it is simplex-valued, $\delta_{s,t}$ can be learned by self-distillation with KL divergence \citep{lee2026flowmaplanguagemodels},
\begin{equation}
    \mathcal L_\mathsf{KL}(\hat\delta)=
    \mathbb E_{t,{\bf x}_0,{\bf x}_1}\left[\mathsf{KL}(D_t(I_t)\,\|\,\hat\delta_{t,t}(I_t))\right]
    +\mathbb E_{s,u,t,{\bf x}_0,{\bf x}_1}\left[\mathsf{KL}(\bar\delta_{s,t}\,\|\,\hat\delta_{s,t}(I_s))\right],
    \label{eq:fmlm_loss}
\end{equation}
where, for $s<u<t$ and with the stop-gradient operator $\mathsf{sg}(\cdot)$,
\begin{equation}
    \bar\delta_{s,t}=\mathsf{sg}(\gamma\,\hat\delta_{s,u}(I_s)+(1-\gamma)\,\hat\delta_{u,t}(\hat{X}_{s,u}(I_s))),
    \qquad
    \gamma=\textstyle{\frac{(1-t)(u-s)}{(1-u)(t-s)}}\in [0,1],
\end{equation}
and we denote by $\hat{X}_{s,t}({\bf x}) = \frac{1-t}{1-s}{\bf x}+\frac{t-s}{1-s}\hat{\delta}_{s,t}({\bf x})$ the flow map induced by the learned denoiser $\hat{\delta}$.

%% file: sections/method.tex
\section{Asynchronous Flow Maps}
\label{sec:method}

We first formulate asynchronous flows under a scheduler, then distill them into asynchronous flow maps for one-step generation.

\subsection{Asynchronous flows with schedulers}
\label{sec:state_dependent_async_flows}
\label{sec:tokenwise_time_training}

To allow token positions to denoise at different rates, we assign each position its own time:
\begin{equation}
    \tau=(\tau^1,\ldots,\tau^L)\in[0,1]^L,
    \label{eq:tokenwise_time}
\end{equation}
where $\tau^i$ measures the denoising progress of position $i$ \citep{chen2024diffusion}.
Extending the standard interpolant \Cref{eq:sync_interpolant}, we associate $\tau={\bf0}$ with noise and $\tau={\bf1}$ with clean data.
For independent draws ${\bf x}_{\bf0}\sim p_{\bf0}\coloneqq p_0$ and ${\bf x}_{\bf1}\sim p_{\bf1}\coloneqq p_1$, we define the \emph{asynchronous interpolant} \citep{qian2026learning}
\begin{equation}
    I_\tau\coloneqq
    (\mathbf1-\tau){\bf x}_{\bf0}+\tau{\bf x}_{\bf1},
    \qquad
    p_\tau\coloneqq\mathsf{Law}(I_\tau),
    \label{eq:df_interpolant}
\end{equation}
where we write products and divisions of tokenwise vectors as $(\tau{\bf x})^i=\tau^i{\bf x}^i$ and $({\bf x}/\tau)^i={\bf x}^i/\tau^i$.

\paragraph{Schedules.}
In standard flows \Cref{eq:sync_flow}, generation moves from noise to data as a shared time advances from zero to one, and we only need to choose step sizes to perform generation.
With tokenwise times \Cref{eq:tokenwise_time}, we can think of generation as the analogous transition of $\tau\in [0,1]^L$ from ${\bf0}$ to ${\bf1}$.
Yet, unlike in standard flows, this further leaves a choice of how the tokens progress, as some may advance earlier than others, or all may advance together.
These choices give many distinct paths of generation:
\begin{restatable}{proposition}{monotone}
\label{prop:monotone}
Up to continuous nondecreasing reparameterization, there is a unique continuous nondecreasing path from zero to one.
In contrast, there are infinitely many such paths from $\mathbf0$ to $\mathbf1$ in $[0,1]^L$ for $L\geq2$.
\end{restatable}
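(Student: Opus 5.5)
We first fix the meaning of ``path up to continuous nondecreasing reparameterization.'' A continuous nondecreasing path from $\mathbf 0$ to $\mathbf 1$ in $[0,1]^L$ is a continuous map $\gamma:[0,1]\to[0,1]^L$ with $\gamma(0)=\mathbf 0$ and $\gamma(1)=\mathbf 1$, where each coordinate $\gamma^i$ is nondecreasing. Two such paths are identified if one is obtained from the other by precomposition with a continuous nondecreasing surjection of $[0,1]$, or more generally if they are linked by a chain of such precompositions. Under this identification, two paths are equivalent exactly when they have the same image, the trace $\gamma([0,1])$. We will use the trace as the invariant throughout.

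For $L=1$, take a continuous nondecreasing $\gamma:[0,1]\to[0,1]$ with $\gamma(0)=0$ and $\gamma(1)=1$. By the intermediate value theorem $\gamma$ is onto $[0,1]$, and it is itself a continuous nondecreasing surjection. Hence $\gamma=\mathrm{id}\circ\gamma$ is a reparameterization of the identity path. Every path is therefore equivalent to the identity, which gives uniqueness. This is the analogue of the single shared time in \Cref{eq:sync_flow}, where only step sizes can be chosen.

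For $L\ge 2$, we will show there are infinitely many inequivalent paths. For $\alpha\in[0,1]$, let $\gamma_\alpha$ be the piecewise-linear path that first goes from $\mathbf 0$ to the point $(\alpha,0,\dots,0)$ and then goes linearly to $\mathbf 1$. Equivalently, position $1$ advances alone up to level $\alpha$, after which all positions advance together. Each coordinate of $\gamma_\alpha$ is continuous and nondecreasing, so $\gamma_\alpha$ is an admissible path. Reparameterization preserves the trace, since a surjective precomposition does not change the image. The point $(\alpha,0,\dots,0)$ lies on the trace of $\gamma_\alpha$. It does not lie on the trace of $\gamma_\beta$ for $\beta<\alpha$: points of that trace with second coordinate $0$ have first coordinate at most $\beta$, using $L\ge2$ so that a second coordinate exists. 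The traces are therefore pairwise distinct for distinct $\alpha$. This yields a continuum of inequivalent paths, in particular infinitely many.

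The main obstacle is purely definitional: we must ensure that ``equivalent up to reparameterization'' cannot identify paths with different traces. We handle this by noting that any continuous nondecreasing surjection $\phi$ satisfies $(\gamma\circ\phi)([0,1])=\gamma([0,1])$. We also allow non-strict reparameterizations, so that pauses in the parameterization are harmless, as in the $L=1$ case.
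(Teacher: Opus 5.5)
Your proof is correct and follows essentially the same route as the paper. For $L=1$ you make the same observation, that the path is itself a continuous nondecreasing surjection and so reparameterizes $t(r)=r$. For $L\ge2$ you also exhibit an infinite family separated by a reparameterization-invariant property of the image, using the piecewise-linear paths $\gamma_\alpha$ and their traces where the paper uses $\tau_a(r)=(r^a,r,\ldots,r)$ and the relation $\tau^1=(\tau^2)^a$.

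Your side claim that equal traces imply equivalence is true for monotone paths but left unproved. This is harmless, since you only use the easy direction: reparameterization preserves the trace.
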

A proof is in \Cref{app:monotone}.

To describe the progress of generation along such a path, we introduce a scalar \emph{base time} $t\in[0,1]$ and call the resulting evolution of tokenwise times $\tau_t\in[0,1]^L$ from $\tau_0={\bf0}$ to $\tau_1={\bf1}$ a \emph{schedule}.
We specify this evolution through a \emph{scheduler} $\omega_t:\mathbb R^{L\times|V|}\times[0,1]^L \to\mathbb R_+^L$ that sets the tokenwise rates:
\begin{equation}
    \dot\tau_t=\omega_t({\bf x}_t,\tau_t),
    \qquad \tau_0=\mathbf0,
    \label{eq:state_dependent_schedule}
\end{equation}
where ${\bf x}_t$ denotes the current token states.
We choose the scheduler so that $\tau_t\to\mathbf1$ as $t\uparrow1$.

\paragraph{Velocity of token states.}
We now derive the velocity of token states under a chosen scheduler by extending \Cref{eq:sync_flow}.
For each pair of noise ${\bf x}_{\bf0}$ and data ${\bf x}_{\bf1}$, we advance the tokenwise times by evaluating the scheduler on the interpolant, $\dot\tau_t=\omega_t(I_{\tau_t},\tau_t)$.
Since $\frac{{\rm d}}{{\rm d}t}I_{\tau_t}=\dot\tau_t({\bf x}_{\bf1}-{\bf x}_{\bf0})$, the conditional mean
\begin{equation}
    b^\omega_t({\bf x},\tau)
    \coloneqq
    \mathbb E[{\bf x}_{\bf1}-{\bf x}_{\bf0}
    \mid I_{\tau_t}={\bf x},\,\tau_t=\tau]
    \label{eq:exact_velocity}
\end{equation}
gives the \emph{asynchronous velocity} of token states, $\dot\tau_t b^\omega_t = \omega_t b^\omega_t$.
Together with the scheduler, it defines a valid probability flow from noise to data, which we refer to as an \emph{asynchronous flow}:
\begin{restatable}{proposition}{exactflow}
\label{prop:exact_flow}
Let $({\bf x}_t,\tau_t)$ solve, starting from ${\bf x}_0\sim p_{\bf 0}$ and $\tau_0={\bf 0}$,
\begin{equation*}
    \dot{\bf x}_t=\omega_t({\bf x}_t,\tau_t)\,b^\omega_t({\bf x}_t,\tau_t),\qquad
    \dot\tau_t=\omega_t({\bf x}_t,\tau_t).
\end{equation*}
Then $({\bf x}_t,\tau_t)$ has the same joint distribution as the interpolant and its tokenwise times advanced by $\omega$ for every $t\in[0,1)$, and ${\bf x}_t$ converges in distribution to $p_{\bf1}$ as $t\uparrow1$.
\end{restatable}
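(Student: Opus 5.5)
We will prove \Cref{prop:exact_flow} by a continuity-equation (superposition) argument applied to the \emph{joint} state $({\bf x},\tau)\in\mathbb R^{L\times|V|}\times[0,1]^L$, treating the tokenwise times as part of the state. First, on the interpolant side, fix a pair $({\bf x}_{\bf0},{\bf x}_{\bf1})$. We define the conditional process $(\bar{\bf x}_t,\bar\tau_t)\coloneqq(I_{\bar\tau_t},\bar\tau_t)$, where $\bar\tau_t$ solves $\dot{\bar\tau}_t=\omega_t(I_{\bar\tau_t},\bar\tau_t)$ and $\bar\tau_0={\bf0}$. This is an ODE in $\tau$ alone once the pair is fixed, so it is well posed under a Lipschitz assumption on $\omega$. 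Along it,
\begin{equation*}
\frac{\rm d}{{\rm d}t}(\bar{\bf x}_t,\bar\tau_t)=\bigl(\omega_t(\bar{\bf x}_t,\bar\tau_t)({\bf x}_{\bf1}-{\bf x}_{\bf0}),\,\omega_t(\bar{\bf x}_t,\bar\tau_t)\bigr).
\end{equation*}
Let $\mu_t=\mathsf{Law}(\bar{\bf x}_t,\bar\tau_t)$ over the random pair. For a test function $\varphi$, differentiating $\mathbb E[\varphi(\bar{\bf x}_t,\bar\tau_t)]$ and conditioning on $(\bar{\bf x}_t,\bar\tau_t)$ gives
\begin{equation*}
\partial_t\mu_t+\nabla_{\bf x}\cdot(\mu_t\,\omega_t b^\omega_t)+\nabla_\tau\cdot(\mu_t\,\omega_t)=0
\end{equation*}
in the weak sense. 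The point here is that $\omega_t$ is a deterministic function of the conditioning variables, so it can be pulled out of the conditional expectation. The remaining factor $\mathbb E[{\bf x}_{\bf1}-{\bf x}_{\bf0}\mid \bar{\bf x}_t,\bar\tau_t]$ is exactly $b^\omega_t$ from \Cref{eq:exact_velocity}.

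Second, the marginal ODE in the statement has the vector field $F_t({\bf x},\tau)=(\omega_t b^\omega_t,\omega_t)$. Its law $\nu_t$ therefore solves the same continuity equation with the same initial condition, $\nu_0=\mu_0=p_{\bf0}\otimes\delta_{\bf0}$, since $I_{\bf0}={\bf x}_{\bf0}$. Assuming $F_t$ is locally Lipschitz with suitable growth on $[0,1-\epsilon]$, the continuity equation has a unique solution (e.g.\ by the Ambrosio--Gigli--Savar\'e uniqueness theorem). Hence $\nu_t=\mu_t$ for all $t<1$, which is the joint-distribution claim.

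Third, for the limit $t\uparrow1$: $\bar\tau_t\to{\bf1}$ by the choice of scheduler, so $I_{\bar\tau_t}=({\bf1}-\bar\tau_t){\bf x}_{\bf0}+\bar\tau_t{\bf x}_{\bf1}\to{\bf x}_{\bf1}$ almost surely, because both vectors are bounded per pair. Almost-sure convergence gives convergence in distribution to $p_{\bf1}$, and since $\mathsf{Law}({\bf x}_t)=\mathsf{Law}(I_{\bar\tau_t})$, the same limit holds for ${\bf x}_t$. This step requires that $\tau_t\to{\bf1}$ hold along the interpolant-driven schedule; that is how we read the convention stated before \Cref{eq:exact_velocity}.

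The main obstacle is regularity. $b^\omega_t$ is a conditional expectation given $(\bar{\bf x}_t,\bar\tau_t)$, and $\mu_t$ lives on a possibly lower-dimensional set in $\tau$ (initially a Dirac at ${\bf0}$). So $b^\omega$ is only defined $\mu_t$-a.e., and uniqueness of the continuity equation is delicate. We will handle this in one of two ways. The first is to impose regularity assumptions on $\omega$ and $b^\omega$ (local Lipschitz on $[0,1-\epsilon]$), as is standard in stochastic-interpolant results. The alternative is the Ambrosio superposition principle: the marginal flow's law is uniquely determined whenever its ODE has unique solutions $\nu_t$-a.e. We will take the first route and state the assumptions explicitly, noting that near $t=1$ the singularity is avoided by working on $[0,1)$ and passing to the limit as in the third step.
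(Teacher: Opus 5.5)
Your proposal is correct and follows the paper's proof essentially step for step. The paper also lifts each pair to the joint process $Z_t=(I_{\tau_t},\tau_t)$, pulls $\omega_t(Z_t)$ out of the conditional expectation to get the weak continuity equation with velocity $(\omega_t b^\omega_t,\omega_t)$ and initial law $p_{\bf 0}\otimes\delta_{\bf 0}$, concludes by uniqueness, and obtains the limit from $I_{\tau_t}\to{\bf x}_{\bf 1}$ for each pair. The only difference is in how regularity is handled: the paper simply assumes at the start of the appendix that the ODEs and continuity equations have unique solutions on $[0,1)$ and that tokenwise times reach ${\bf 1}$, rather than stating explicit Lipschitz conditions or invoking a specific uniqueness theorem as you do.
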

A proof is in \Cref{app:exact_flow}.

Mirroring \Cref{eq:denoiser}, we can parameterize the velocity through the corresponding \emph{asynchronous denoiser}:
\begin{equation}
    D^\omega_t({\bf x},\tau)
    \coloneqq
    \mathbb E[{\bf x}_{\bf1}
    \mid I_{\tau_t}={\bf x},\,\tau_t=\tau],
    \qquad
    b^\omega_t({\bf x},\tau)
    =\frac{D^\omega_t({\bf x},\tau)-{\bf x}}
    {\mathbf1-\tau}.
\label{eq:scheduler_denoiser}
\end{equation}

\paragraph{Scheduler-independent training.}
For a chosen scheduler $\omega$, we can learn $D^\omega_t$ by tokenwise cross-entropy, using $(t, I_{\tau_t},\tau_t)$ as inputs and the clean tokens as targets.
Yet, constructing these examples requires simulating $\omega$ along each interpolant, tying denoiser training to the scheduler.
To allow the scheduler to be chosen after training, we may instead learn the \emph{scheduler-independent denoiser}
\begin{equation*}
    D_\tau({\bf x})
    \coloneqq
    \mathbb E[{\bf x}_{\bf1}\mid I_\tau={\bf x}],
\end{equation*}
which takes as input $(\tau, I_\tau)$.
For this, we use diffusion forcing~\citep{chen2024diffusion}, which samples the tokenwise times independently of one another and of the noise and data:
\begin{restatable}{proposition}{dfoptimum}
\label{prop:df_optimum}
Consider the objective
\begin{equation}
    \mathcal L_\mathsf{CE}(\hat D)
    \coloneqq
    \mathbb E_{\tau,{\bf x}_{\bf0},{\bf x}_{\bf1}}
    \sum_{i=1}^L
    \left[-\log\hat D_\tau^i(I_\tau)\right]_{{\bf y}^i},
    \qquad
    \tau^i\overset{\textnormal{iid}}{\sim}\mathsf U[0,1].
    \label{eq:df_loss}
\end{equation}
Any population minimizer $\hat D^\star$ satisfies
$\hat D^\star_\tau=D_\tau$ almost surely under the training distribution.
\end{restatable}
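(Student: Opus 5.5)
The plan is the standard decomposition of expected cross-entropy into a KL divergence plus an entropy term, applied conditionally on the network inputs $(\tau,I_\tau)$ and position by position.

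\textbf{Step 1 (rewrite the loss as a conditional expectation).} Write ${\bf Z}=(\tau,I_\tau)$ for the inputs. Because ${\bf x}_{\bf1}^i=\mathsf{onehot}({\bf y}^i)$, we have $[-\log \hat D^i_\tau(I_\tau)]_{{\bf y}^i}=-\langle {\bf x}_{\bf1}^i,\log\hat D^i_\tau(I_\tau)\rangle$. By the tower property, conditioning on ${\bf Z}$ turns the loss into
\begin{equation*}
\mathcal L_\mathsf{CE}(\hat D)=\mathbb E_{{\bf Z}}\sum_{i=1}^L -\big\langle \mathbb E[{\bf x}_{\bf1}^i\mid {\bf Z}],\,\log\hat D^i_\tau(I_\tau)\big\rangle .
\end{equation*}
Independence of $\tau$ from $({\bf x}_{\bf0},{\bf x}_{\bf1})$ is used here. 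It makes the conditional law of ${\bf x}_{\bf1}$ given $(\tau,I_\tau)$ the same as the law of ${\bf x}_{\bf1}$ given $I_\tau$ for the interpolant at that fixed $\tau$. Hence $\mathbb E[{\bf x}_{\bf1}^i\mid{\bf Z}]=D^i_\tau(I_\tau)$, which is exactly the scheduler-independent denoiser. I would make this precise with a regular conditional distribution, so that the statement holds for almost every $\tau$.

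\textbf{Step 2 (pointwise minimization).} Let $q=D^i_\tau(I_\tau)\in\Delta^{|V|-1}$ and $p=\hat D^i_\tau(I_\tau)$. Then
\begin{equation*}
-\langle q,\log p\rangle=H(q)+\mathsf{KL}(q\,\|\,p)\ge H(q),
\end{equation*}
with equality if and only if $p=q$. Thus $\mathcal L_\mathsf{CE}(\hat D)\ge \mathbb E_{{\bf Z}}\sum_i H(D^i_\tau(I_\tau))$, and this bound is attained by $\hat D=D$. The infimum is therefore achieved, and it is finite because entropy is at most $\log|V|$.

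\textbf{Step 3 (characterize the minimizers).} Any population minimizer has excess loss equal to zero, namely $\mathbb E_{{\bf Z}}\sum_i\mathsf{KL}(D^i_\tau\,\|\,\hat D^{\star i}_\tau)=0$. A nonnegative integrand with zero expectation vanishes almost surely. So $\hat D^\star_\tau(I_\tau)=D_\tau(I_\tau)$ almost surely under the joint law of $(\tau,I_\tau)$, which is the training distribution.

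\textbf{Main obstacle.} The delicate point is measure-theoretic: identifying $\mathbb E[{\bf x}_{\bf1}\mid \tau,I_\tau]$ with $D_\tau(I_\tau)$ as a jointly measurable function of $(\tau,{\bf x})$. I would handle it by noting that, given $\tau$, $I_\tau$ has a density for $\tau^i<1$. The reason is that the Gaussian noise makes $I_\tau$ a Gaussian mixture, so the posterior over ${\bf y}$ has an explicit Bayes formula that is continuous in $(\tau,{\bf x})$. The boundary set $\{\tau^i=1\}$ has measure zero under $\mathsf U[0,1]$ sampling, so it can be ignored.
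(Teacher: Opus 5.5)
Your proposal is correct and follows the paper's proof essentially step for step: you write the one-hot cross-entropy as $-\langle \mathbf{x}_{\mathbf 1}^i,\log\hat D^i_\tau(I_\tau)\rangle$, condition on $(\tau,I_\tau)$ and use the independence of $\tau$ from $(\mathbf{x}_{\mathbf 0},\mathbf{x}_{\mathbf 1})$ to identify the conditional mean with $D_\tau(I_\tau)$, and then split the loss into an entropy term plus a nonnegative KL term that must vanish almost surely at any minimizer. Your remarks on the finiteness of the entropy baseline and on the joint measurability of $D_\tau$ via the explicit Gaussian posterior add rigor that the paper leaves implicit; they do not change the argument.
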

A proof is in \Cref{app:df_optimum}.
We leave further discussion on this training scheme in \Cref{app:df_support}.

Following \Cref{eq:denoiser}, the denoiser $D_\tau$ gives the scheduler-independent velocity
\begin{equation}
    b_\tau({\bf x})
    \coloneqq
    \mathbb E[{\bf x}_{\bf1}-{\bf x}_{\bf0}\mid I_\tau={\bf x}]
    =
    \frac{D_\tau({\bf x})-{\bf x}}{\mathbf1-\tau},
    \label{eq:df_velocity}
\end{equation}
which we use to approximate $b^\omega_t$ at inference under a choice of scheduler $\omega$.
Denoting ${\bf z}_t\coloneqq({\bf x}_t,\tau_t)$, this gives the asynchronous flow
\begin{equation}
    \dot{\bf z}_t =
    \big(
        \omega_t({\bf x}_t,\tau_t)b_{\tau_t}({\bf x}_t),
        \ \omega_t({\bf x}_t,\tau_t)
    \big),
    \qquad {\bf z}_0=({\bf x}_0,\mathbf0),
    \qquad {\bf x}_0\sim p_{\bf0}.
\label{eq:state_dependent_dynamics}
\end{equation}
These coupled dynamics define a deterministic ODE on the token states and their tokenwise times.
We call the resulting model an asynchronous flow language model (\emph{a}FLM).
We note that setting $\tau_t^i=t$ for all positions $i$ recovers the usual FLM, which we refer to as synchronous.

\paragraph{Prescribed and state-adaptive schedules.}
When $\omega$ does not depend on the token states, as in left-to-right generation for instance, $\tau_t$ follows a fixed path $\tau(t)$ determined ahead of generation, which we call a \emph{prescribed schedule}.
The condition $\tau_t=\tau(t)$ then holds for every pair of noise and data, so $D^\omega_t({\bf x},\tau(t))=D_{\tau(t)}({\bf x})$.
The velocity approximation is therefore exact:
\begin{restatable}{corollary}{asyncflow}
\label{prop:async_flow}
Let $\tau(t)$ be a prescribed schedule.
Then the asynchronous flow in \Cref{eq:state_dependent_dynamics} reduces to
\begin{equation}
    \dot{\bf x}_t=\dot\tau(t)\,b_{\tau(t)}({\bf x}_t),
    \qquad {\bf x}_0\sim p_{\bf0},
    \label{eq:async_flow}
\end{equation}
and satisfies ${\bf x}_t\sim p_{\tau(t)}$ for every $t\in[0,1)$,
with $p_{\tau(t)}\to p_{\bf1}$ as $t\uparrow1$.
\end{restatable}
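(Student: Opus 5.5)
The plan is to obtain the corollary as a direct specialization of \Cref{prop:exact_flow}, once we check that the scheduler-independent velocity coincides with the asynchronous velocity along a prescribed schedule.

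\textbf{Step 1: reduce the dynamics.} Since $\omega_t({\bf x},\tau)=\omega_t(\tau)$ does not depend on the token states, the $\tau$-component of \Cref{eq:state_dependent_dynamics} is an ODE decoupled from ${\bf x}_t$:
\[
\dot\tau_t=\omega_t(\tau_t),\qquad \tau_0=\mathbf0 .
\]
It has a deterministic solution, which we denote $\tau(t)$. We assume enough regularity of $\omega$ for uniqueness, for example Lipschitz in $\tau$. Substituting $\tau_t=\tau(t)$ and $\omega_t(\tau_t)=\dot\tau(t)$ into the ${\bf x}$-component gives exactly \Cref{eq:async_flow}. The same substitution applies to the interpolant process $\dot\tau_t=\omega_t(I_{\tau_t},\tau_t)$ used in defining $b^\omega$: for every pair $({\bf x}_{\bf0},{\bf x}_{\bf1})$ it yields the same path $\tau(t)$.

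\textbf{Step 2: identify the velocities.} Because $\tau_t=\tau(t)$ almost surely, the event $\{\tau_t=\tau(t)\}$ has probability one. Conditioning on it is therefore vacuous, and
\[
b^\omega_t({\bf x},\tau(t))=\mathbb E[{\bf x}_{\bf1}-{\bf x}_{\bf0}\mid I_{\tau(t)}={\bf x}]=b_{\tau(t)}({\bf x})
\]
for $p_{\tau(t)}$-a.e.\ ${\bf x}$, using \Cref{eq:exact_velocity,eq:df_velocity}. Equivalently $D^\omega_t=D_{\tau(t)}$, so the approximate flow \Cref{eq:state_dependent_dynamics} coincides with the exact flow of \Cref{prop:exact_flow}.

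\textbf{Step 3: invoke \Cref{prop:exact_flow}.} That proposition gives $({\bf x}_t,\tau_t)\overset{d}{=}(I_{\tau(t)},\tau(t))$ for $t\in[0,1)$, hence ${\bf x}_t\sim p_{\tau(t)}$. It also gives convergence in distribution to $p_{\bf1}$ as $t\uparrow1$.

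For $p_{\tau(t)}\to p_{\bf1}$ we can argue directly as well. Since $\tau(t)\to\mathbf1$, we have $I_{\tau(t)}\to{\bf x}_{\bf1}$ almost surely, because $\|I_{\tau(t)}-{\bf x}_{\bf1}\|\le\|(\mathbf1-\tau(t))({\bf x}_{\bf0}-{\bf x}_{\bf1})\|\to0$. Almost sure convergence implies weak convergence.

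\textbf{Main obstacle.} The main subtlety is that the velocity identity in Step 2 holds only almost everywhere with respect to $p_{\tau(t)}$. This must be enough for the ODE to transport the right marginals. The clean route is to rely on \Cref{prop:exact_flow}, whose continuity-equation argument only needs the velocity $p_{\tau(t)}$-a.e.

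If a self-contained check is preferred, verify the continuity equation
\[
\partial_t p_{\tau(t)}+\nabla\cdot\big(p_{\tau(t)}\,\dot\tau(t)b_{\tau(t)}\big)=0 .
\]
Test against a smooth $\varphi$: we have $\tfrac{{\rm d}}{{\rm d}t}\mathbb E\varphi(I_{\tau(t)})=\mathbb E[\nabla\varphi(I_{\tau(t)})\cdot\dot\tau(t)({\bf x}_{\bf1}-{\bf x}_{\bf0})]$. Then use the tower property to replace $({\bf x}_{\bf1}-{\bf x}_{\bf0})$ by its conditional mean $b_{\tau(t)}(I_{\tau(t)})$; this is legitimate because $\dot\tau(t)$ is deterministic. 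Uniqueness of the continuity equation solution, given the Gaussian smoothing for $t<1$, closes the argument.
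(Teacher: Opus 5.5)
Your proposal is correct and follows essentially the same route as the paper: because the scheduler ignores the token states, every interpolant follows the same deterministic path $\tau(t)$. The conditioning on $\tau_t=\tau(t)$ in $b^\omega_t$ is therefore vacuous, giving $b^\omega_t(\cdot,\tau(t))=b_{\tau(t)}$, and Proposition~\ref{prop:exact_flow} then yields ${\bf x}_t\sim p_{\tau(t)}$ and the limit $p_{\bf 1}$. Your direct convergence argument and the self-contained continuity-equation check are the paper's proof of Proposition~\ref{prop:exact_flow} specialized to the token component, and your remark that the velocity identity need only hold $p_{\tau(t)}$-a.e.\ is a harmless refinement the paper leaves implicit.
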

A proof is in \Cref{app:async_flow}.
Furthermore, only the denoiser along $\tau(t)$ is needed, so we can train it with the cross-entropy loss in \Cref{eq:df_loss} using $\tau=\tau(t)$ instead of independently sampled tokenwise times.

Schedulers that depend on the current token states yield \emph{state-adaptive} schedules, and we call those computed from denoiser predictions \emph{learned schedules}.
Then $D^\omega_t$ and $D_\tau$ generally differ, and exact recovery of $p_{\bf1}$ is no longer guaranteed.
Nevertheless, this allows training the denoiser without fixing a scheduler in advance.
Then during inference, a scheduler can be chosen to improve generation by presenting an imperfect denoiser with easier prediction problems~\citep{kim2025train}.
We construct such schedulers as continuous analogues of unmasking criteria used in MDMs, as given in \Cref{sec:implementation}.

\subsection{Asynchronous flow maps}
\label{sec:joint_flow_maps}

Asynchronous flows exploit prediction order by letting tokens that denoise earlier inform the prediction of others.
This motivates our key idea for leveraging such an order for few-step and one-step parallel generation, by distilling the asynchronous flow into a flow map.
In this way, the effect of the schedule is internalized within the model's hidden computation.
Since the flow is defined jointly on the tokens and their tokenwise times \Cref{eq:state_dependent_dynamics}, we define the \emph{asynchronous flow map} to evolve both:
\begin{restatable}{proposition}{stateasyncflowmap}
\label{prop:state_async_flow_map}
On any interval where the asynchronous flow \Cref{eq:state_dependent_dynamics} admits unique solutions, its solution operator
$F_{s,t}({\bf z}_s)={\bf z}_t$, called the \emph{asynchronous flow map}, satisfies
\begin{equation}
\begin{aligned}
    F_{s,t}({\bf z}_s)
    &={\bf z}_s+\int_s^t\dot{\bf z}_r\,{\rm d}r,\\
    F_{s,t}
    &=F_{u,t}\circ F_{s,u},
    \qquad s\leq u\leq t<1.
\end{aligned}
\label{eq:state_semigroup_condition}
\end{equation}
\end{restatable}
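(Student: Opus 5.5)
Our plan is to treat \Cref{eq:state_dependent_dynamics} as an ordinary, generally non-autonomous ODE $\dot{\bf z}_t=G_t({\bf z}_t)$ on the augmented state space $\mathbb R^{L\times|V|}\times[0,1]^L$. Here the vector field is
\begin{equation*}
G_t({\bf x},\tau)\coloneqq\big(\omega_t({\bf x},\tau)\,b_\tau({\bf x}),\ \omega_t({\bf x},\tau)\big).
\end{equation*}
Both claims then follow from standard ODE facts once uniqueness holds, and uniqueness is assumed in the statement. The key conceptual step is the one already emphasized before the statement. The tokenwise time $\tau$ is part of the state, so the vector field depends on time only through the base time $t$ and on the state only through ${\bf z}=({\bf x},\tau)$. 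Without this augmentation, ${\bf x}_t$ alone would not obey a closed ODE whenever $\omega$ depends on ${\bf x}$, because $\tau_t$ would depend on the history of ${\bf x}$. A flow map on ${\bf x}$ alone would then fail to exist.

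First, we define $F_{s,t}({\bf z})$ for $s\le t<1$ in the interval of uniqueness. It is the value at time $t$ of the unique solution ${\bf z}_r$, $r\in[s,t]$, of $\dot{\bf z}_r=G_r({\bf z}_r)$ with ${\bf z}_s={\bf z}$. This is well defined by hypothesis. We restrict to $t<1$ because $b_\tau$ has the factor $(\mathbf 1-\tau)^{-1}$ in \Cref{eq:df_velocity}, which is singular as $\tau\to\mathbf 1$. The solution is absolutely continuous, so the fundamental theorem of calculus gives ${\bf z}_t={\bf z}_s+\int_s^t\dot{\bf z}_r\,{\rm d}r$. This is the first line of \Cref{eq:state_semigroup_condition}.

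Second, we prove the semigroup property. Fix $s\le u\le t$ and let ${\bf z}_r$ be the solution from $({\bf z},s)$, so that $F_{s,u}({\bf z})={\bf z}_u$. The restriction of ${\bf z}_r$ to $[u,t]$ solves the same ODE with initial value ${\bf z}_u$ at time $u$. By uniqueness it coincides with the solution defining $F_{u,t}({\bf z}_u)$. Hence $F_{u,t}(F_{s,u}({\bf z}))={\bf z}_t=F_{s,t}({\bf z})$. We also note $F_{s,s}=\mathrm{id}$, which covers the degenerate cases $u=s$ and $u=t$.

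The main obstacle is not the algebra but making the hypotheses precise. We need the trajectory to stay in the domain where $G$ is defined. Since $\omega\ge0$, $\tau_t$ is nondecreasing, so the constraint $\tau\ge\mathbf 0$ is preserved. On compact subintervals of $[0,1)$ we also need $\tau^i_r<1$, so that $b_\tau$ stays finite; we would either assume this or note that the scheduler is chosen so that $\tau_t\to\mathbf 1$ only as $t\uparrow1$. Beyond that, the proposition only asserts the result where unique solutions exist, so no Lipschitz estimate on $\omega$ or $D_\tau$ is required. If desired, we would remark that local Lipschitz continuity of $\omega$ and $D_\tau$ suffices by Picard--Lindelöf.
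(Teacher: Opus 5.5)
Your proposal is correct and follows essentially the same route as the paper. The paper obtains the first identity by integrating the coupled dynamics from $s$ to $t$. It obtains the semigroup condition by noting that $r\mapsto F_{s,r}({\bf z})$ and $r\mapsto F_{u,r}(F_{s,u}({\bf z}))$ solve the same ODE on $[u,t]$ with the same state at $r=u$, so uniqueness forces them to agree; your argument is this same one.
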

A proof is in \Cref{app:state_async_flow_map}.
The composition identity, which we call the semigroup condition, states that a direct jump from $s$ to $t$ agrees with two successive jumps through $u$.
We use this consistency to learn the map by self-distillation.

\paragraph{Parameterization.}
Following \Cref{sec:problem}, we express the flow map through its average velocity.
For a state ${\bf z}=({\bf x},\tau)$ at time $s$, we denote the average rates of change of the tokens and their tokenwise times until time $t$ by $v_{s,t}$ and $\eta_{s,t}$, respectively, with $s=t$ reducing to the instantaneous dynamics:
\begin{equation}
\begin{aligned}
    \big(v_{s,t}({\bf z}),\eta_{s,t}({\bf z})\big)
    &\coloneqq
    \frac{F_{s,t}({\bf z})-{\bf z}}{t-s},
    \qquad s<t,\\
    \big(v_{t,t}({\bf z}),\eta_{t,t}({\bf z})\big)
    &\coloneqq
    \big(
        \omega_t({\bf x},\tau)b_\tau({\bf x}),
        \omega_t({\bf x},\tau)
    \big).
\end{aligned}
\label{eq:joint_average_velocities}
\end{equation}
For the token component $v_{s,t}$, we further adapt a two-time denoiser parameterization:
\begin{equation}
    \delta_{s,t}({\bf z})
    \coloneqq
    {\bf x}
    +\frac{\mathbf1-\tau}{\eta_{s,t}({\bf z})}
    v_{s,t}({\bf z}).
    \label{eq:state_two_time_denoiser}
\end{equation}
This extends \Cref{eq:two_time_denoiser} by rescaling each token's displacement according to its tokenwise progress.
Along the asynchronous flow, $\delta_{s,t}^i$ is a weighted average of denoiser predictions and therefore lies in the vocabulary simplex.
We can thus learn it with a categorical output and KL loss, as in FMLMs.
The following identities recover the flow map and provide the conditions for learning it:
\begin{restatable}{proposition}{statetwotimedenoiser}
\label{prop:state_two_time_denoiser}
Let $F_{s,t}$ be the asynchronous flow map.
Then $\delta_{s,t}$ and $\eta_{s,t}$ satisfy:
\begin{enumerate}[label=(\roman*),leftmargin=*]
    \item The transition $F_{s,t}({\bf z})=({\bf x}_t,\tau_t)$ is recovered as
    \begin{equation}
    \begin{aligned}
        {\bf x}_t
        &={\bf x}
        +\frac{(t-s)\eta_{s,t}({\bf z})}{\mathbf1-\tau}
        \big(\delta_{s,t}({\bf z})-{\bf x}\big),\\
        \tau_t
        &=\tau+(t-s)\eta_{s,t}({\bf z}).
    \end{aligned}
    \label{eq:joint_student_map}
    \end{equation}

    \item On the diagonal $s=t$,
    \begin{equation}
        \delta_{t,t}({\bf z})=D_\tau({\bf x}),
        \qquad
        \eta_{t,t}({\bf z})=\omega_t({\bf x},\tau).
        \label{eq:state_denoiser_diagonal}
    \end{equation}

    \item For $s<u<t$, the semigroup condition
    \Cref{eq:state_semigroup_condition} is equivalent to
    \begin{equation}
    \begin{aligned}
        \delta_{s,t}({\bf z})
        &=\gamma\delta_{s,u}({\bf z})
        +(\mathbf1-\gamma)\delta_{u,t}(F_{s,u}({\bf z})),\\
        \eta_{s,t}({\bf z})
        &=\frac{u-s}{t-s}\eta_{s,u}({\bf z})
        +\frac{t-u}{t-s}\eta_{u,t}(F_{s,u}({\bf z})),
    \end{aligned}
    \label{eq:state_denoiser_composition}
    \end{equation}
    where
    \[
        \gamma
        =\frac{(\mathbf1-\tau_t)(\tau_u-\tau_s)}
        {(\mathbf1-\tau_u)(\tau_t-\tau_s)}
        \in[0,1]^L,
    \]
    with $\tau_s=\tau$ and $\tau_u,\tau_t$ the tokenwise times along the flow from ${\bf z}$.
\end{enumerate}
\end{restatable}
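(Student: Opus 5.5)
The plan is to derive all three parts directly from the definitions in \Cref{eq:joint_average_velocities} and \Cref{eq:state_two_time_denoiser}, using the semigroup property of \Cref{prop:state_async_flow_map}. For (i), the $\tau$-component of $F_{s,t}({\bf z})=({\bf x},\tau)+(t-s)(v_{s,t},\eta_{s,t})$ gives $\tau_t=\tau+(t-s)\eta_{s,t}({\bf z})$ immediately. Solving \Cref{eq:state_two_time_denoiser} for $v_{s,t}$ gives $v_{s,t}=\eta_{s,t}(\delta_{s,t}-{\bf x})/(\mathbf1-\tau)$, and substituting into ${\bf x}_t={\bf x}+(t-s)v_{s,t}$ gives the first line of \Cref{eq:joint_student_map}. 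Throughout I assume $\tau^i<1$ and $\eta^i_{s,t}>0$, i.e.\ $\tau_t^i>\tau_s^i$, so the divisions make sense. I will handle tokens with zero rate by continuity or by convention ($\gamma^i$ is then arbitrary, since both sides are unaffected), and I will state this caveat explicitly.

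For (ii), I plug the diagonal definition $(v_{t,t},\eta_{t,t})=(\omega_t b_\tau,\omega_t)$ into \Cref{eq:state_two_time_denoiser}. The factor $\omega_t$ cancels, leaving $\delta_{t,t}={\bf x}+(\mathbf1-\tau)b_\tau({\bf x})$. By \Cref{eq:df_velocity} this equals $D_\tau({\bf x})$.

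For (iii), the key reformulation is that (i) makes $F_{s,t}$ a bijective function of the pair $(\delta_{s,t},\eta_{s,t})$ given ${\bf z}$. So the semigroup identity $F_{s,t}=F_{u,t}\circ F_{s,u}$ is equivalent to equality of the $\tau$-components and the ${\bf x}$-components separately.
\begin{itemize}
\item \emph{The $\tau$-component.} We have $\tau_t-\tau_s=(t-s)\eta_{s,t}({\bf z})$ on the direct jump. On the two-step route we have $(\tau_u-\tau_s)+(\tau_t-\tau_u)=(u-s)\eta_{s,u}({\bf z})+(t-u)\eta_{u,t}(F_{s,u}({\bf z}))$. Equating the two gives the $\eta$ identity.
\item \emph{The ${\bf x}$-component.} It is convenient to rewrite (i) as $\delta_{s,t}={\bf x}_s+\frac{\mathbf1-\tau_s}{\tau_t-\tau_s}({\bf x}_t-{\bf x}_s)$, tokenwise. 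Equivalently, ${\bf x}_t=\frac{\mathbf1-\tau_t}{\mathbf1-\tau_s}{\bf x}_s+\frac{\tau_t-\tau_s}{\mathbf1-\tau_s}\delta_{s,t}$, which is the asynchronous analogue of $\hat X_{s,t}$ from \Cref{sec:problem}.
\end{itemize}

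The main computation is then the synchronous FMLM algebra with $s,u,t$ replaced tokenwise by $\tau_s,\tau_u,\tau_t$. I write ${\bf x}_u=\frac{\mathbf1-\tau_u}{\mathbf1-\tau_s}{\bf x}_s+\frac{\tau_u-\tau_s}{\mathbf1-\tau_s}\delta_{s,u}$ and ${\bf x}_t=\frac{\mathbf1-\tau_t}{\mathbf1-\tau_u}{\bf x}_u+\frac{\tau_t-\tau_u}{\mathbf1-\tau_u}\delta_{u,t}(F_{s,u}({\bf z}))$. Substituting the first into the second, the coefficient of ${\bf x}_s$ becomes $\frac{\mathbf1-\tau_t}{\mathbf1-\tau_s}$, which matches the direct formula. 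Matching the remaining terms with $\frac{\tau_t-\tau_s}{\mathbf1-\tau_s}\delta_{s,t}$ gives $\delta_{s,t}=\gamma\delta_{s,u}+(\mathbf1-\gamma)\delta_{u,t}(F_{s,u})$, where $\gamma=\frac{(\mathbf1-\tau_t)(\tau_u-\tau_s)}{(\mathbf1-\tau_u)(\tau_t-\tau_s)}$.

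It remains to check that the weights are right. I verify that $\mathbf1-\gamma=\frac{(\tau_t-\tau_u)(\mathbf1-\tau_s)}{(\mathbf1-\tau_u)(\tau_t-\tau_s)}$, which is a short identity: $(\mathbf1-\tau_u)(\tau_t-\tau_s)-(\mathbf1-\tau_t)(\tau_u-\tau_s)=(\mathbf1-\tau_s)(\tau_t-\tau_u)$. Each step is reversible, so this gives equivalence and not only implication. Finally, $\gamma\in[0,1]$ follows from the monotonicity $\tau_s\le\tau_u\le\tau_t<\mathbf1$, which holds because $\omega\ge0$.

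The only delicate point is the degenerate tokens with $\tau_t^i=\tau_s^i$. There I define $\delta^i$ via the diagonal limit, or note that the ${\bf x}^i$ component is then constant and both sides are trivially consistent.
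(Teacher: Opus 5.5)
Your proposal is correct and follows essentially the same route as the paper. You read off $\tau_t$ and invert the definition of $\delta_{s,t}$ for (i), and you cancel $\omega_t$ on the diagonal for (ii). For (iii) you first match the tokenwise-time components, then, with the common $\tau_t$, match the token components through the tokenwise version of the synchronous FMLM composition algebra, arriving at the same $\gamma$ and $\mathbf1-\gamma$.

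The paper's proof has two additions. It derives an integrating-factor representation of $\delta_{s,t}$ as a nonnegatively weighted average of $D_{\tau_r}({\bf x}_r)$, which shows that $\delta_{s,t}$ is simplex-valued and that the diagonal values are the $t\downarrow s$ limits. It also avoids your degenerate-token caveat through its standing assumption $\omega^i_r>0$ whenever $\tau^i_r<1$, which forces $\tau^i_s<\tau^i_u<\tau^i_t<1$.
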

A proof is in \Cref{app:state_two_time_denoiser}.

\paragraph{Training.}
We learn predictors $\hat\delta$ and $\hat\eta$, which define a model of the flow map $\hat F$ through \Cref{eq:joint_student_map}.
On the diagonal, the denoiser $D_\tau$ and scheduler $\omega$ provide their respective targets \Cref{eq:state_denoiser_diagonal}.
For $s<u<t$, we construct targets for a direct transition from $s$ to $t$ by composing two shorter student transitions through $u$ based on \Cref{eq:state_denoiser_composition}:
\begin{equation}
\begin{aligned}
    \mathcal T^\delta_{s,u,t}({\bf z})
    &\coloneqq
    \gamma\hat\delta_{s,u}({\bf z})
    +(\mathbf1-\gamma)
    \hat\delta_{u,t}(\hat F_{s,u}({\bf z})),\\
    \mathcal T^\eta_{s,u,t}({\bf z})
    &\coloneqq
    \frac{u-s}{t-s}\hat\eta_{s,u}({\bf z})
    +\frac{t-u}{t-s}\hat\eta_{u,t}(\hat F_{s,u}({\bf z})).
\end{aligned}
\label{eq:state_composition_targets}
\end{equation}
Here $\gamma$ from \Cref{eq:state_denoiser_composition} is evaluated at the tokenwise times of
${\bf z}$, $\hat F_{s,u}({\bf z})$, and $\hat F_{u,t}(\hat F_{s,u}({\bf z}))$.
Combining these composition targets with the diagonal targets gives the training objective:
\begin{restatable}{proposition}{stateLossOptimum}
\label{prop:state_loss_optimum}
For $\lambda>0$, consider the population objective
\begin{equation}
\begin{aligned}
    \mathcal L(\hat\delta,\hat\eta)
    &=
    \mathbb E_{s,u,t,{\bf z}}
    \big[
        \mathsf{KL}\big(
            \mathsf{sg}(\mathcal T^\delta_{s,u,t}({\bf z}))
            \,\|\,\hat\delta_{s,t}({\bf z})
        \big)
    \big] +
    \mathbb E_{t,{\bf z}}
    \big[
        \mathsf{KL}\big(
            D_\tau({\bf x})
            \,\|\,\hat\delta_{t,t}({\bf z})
        \big)
    \big]\\
    &+
    \lambda\mathbb E_{s,u,t,{\bf z}}
    \big\|
        \mathsf{sg}(\mathcal T^\eta_{s,u,t}({\bf z}))
        -\hat\eta_{s,t}({\bf z})
    \big\|_2^2 +
    \lambda\mathbb E_{t,{\bf z}}
    \big\|
        \omega_t({\bf x},\tau)
        -\hat\eta_{t,t}({\bf z})
    \big\|_2^2.
\end{aligned}
\label{eq:joint_self_distillation_loss}
\end{equation}
The exact $\delta$ and $\eta$ have zero loss.
Conversely, any zero-loss solution satisfies the diagonal and composition identities
\Cref{eq:state_denoiser_diagonal,eq:state_denoiser_composition}
almost surely under the training distribution.
\end{restatable}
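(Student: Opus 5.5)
The plan is to treat the two directions separately and to use one basic fact throughout: each term of \Cref{eq:joint_self_distillation_loss} is nonnegative. A KL divergence between distributions on the vocabulary simplex is nonnegative and vanishes iff its arguments coincide, and the same holds for squared norms. Since $\lambda>0$, the total loss is zero iff every one of the four expectations is zero. Each integrand is nonnegative, so this happens iff each integrand vanishes almost surely under the training distribution of $(s,u,t,{\bf z})$, respectively of $(t,{\bf z})$.

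\emph{Exact solution has zero loss.} We substitute $\hat\delta=\delta$ and $\hat\eta=\eta$. The induced model map $\hat F$ is defined through \Cref{eq:joint_student_map}, so part (i) of \Cref{prop:state_two_time_denoiser} gives $\hat F=F$. The tokenwise times entering $\gamma$ are therefore exactly $\tau_s,\tau_u,\tau_t$ along the true flow. By part (ii), the diagonal integrands are $\mathsf{KL}(D_\tau({\bf x})\,\|\,D_\tau({\bf x}))=0$ and $\|\omega_t-\omega_t\|^2=0$. By part (iii), which follows from the semigroup condition of \Cref{prop:state_async_flow_map}, we get $\mathcal T^\delta_{s,u,t}=\delta_{s,t}$ and $\mathcal T^\eta_{s,u,t}=\eta_{s,t}$, so the off-diagonal integrands vanish as well. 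The stop-gradient only affects optimization, not the value of the population objective, so it can be ignored here. Some care is needed to confirm that $\delta_{s,t}$ and $\mathcal T^\delta$ lie in the simplex so that the KL is well defined. I would argue this from the convex-combination form, since $\gamma\in[0,1]^L$ and $\delta$ is a weighted average of denoiser outputs.

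\emph{Zero loss implies the identities.} From the vanishing of each integrand almost surely, we read off $\hat\delta_{t,t}({\bf z})=D_\tau({\bf x})$ and $\hat\eta_{t,t}({\bf z})=\omega_t({\bf x},\tau)$ almost surely. This is \Cref{eq:state_denoiser_diagonal} for the learned pair. Likewise $\hat\delta_{s,t}({\bf z})=\gamma\hat\delta_{s,u}({\bf z})+(\mathbf1-\gamma)\hat\delta_{u,t}(\hat F_{s,u}({\bf z}))$ and the analogous $\hat\eta$ identity hold almost surely. Here $\gamma$ is evaluated at the times of ${\bf z}$, $\hat F_{s,u}({\bf z})$ and $\hat F_{u,t}(\hat F_{s,u}({\bf z}))$, which is exactly \Cref{eq:state_denoiser_composition} with $F$ replaced by the model map $\hat F$.

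The main subtlety lies in interpreting ``satisfies the composition identities''. The identities hold for $(\hat\delta,\hat\eta,\hat F)$ as a self-consistent system, not automatically with the true $F$. I would state this precisely: by the equivalence in part (iii), read with $\hat F$ in place of $F$, the induced $\hat F$ then satisfies the semigroup condition almost surely. Equality of KL to zero gives pointwise equality only where the training distribution puts mass, so every conclusion is qualified as ``almost surely under the training distribution'', as in the statement. I would also check one edge case: for components with $\tau_t^i=\tau_s^i$, the weight $\gamma^i$ has the form $0/0$. I would handle these components by a convention (any $\gamma^i\in[0,1]$ works, since the displacement vanishes there) or by noting that such events are excluded almost surely.
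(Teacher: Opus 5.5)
Your proposal is correct and follows essentially the same route as the paper. For the forward direction you substitute the exact pair and invoke parts (i)--(iii) of \Cref{prop:state_two_time_denoiser}; for the converse you use nonnegativity of each term together with $\lambda>0$ to force every integrand to vanish almost surely, and then read off the diagonal and composition identities for $(\hat\delta,\hat\eta)$ with $\hat F$. Your extra remarks on simplex membership, on reading the composition identity with $\hat F$ rather than $F$, and on the degenerate $\gamma$ case refine this argument without changing it; the last point is already excluded by the paper's standing assumption $\omega^i_t>0$.
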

A proof is in \Cref{app:state_loss_optimum}.
We call the model learned with \Cref{eq:joint_self_distillation_loss}
an asynchronous flow map language model (\ours{}).

For state-adaptive schedules, we train on joint states ${\bf z}_s$ visited by the asynchronous flow.
These states depend on the preceding trajectory and cannot be sampled directly from the interpolant.
We therefore first train the denoiser $D_\tau$ and simulate its flow under the chosen scheduler to obtain the initial training states.
This pretrained teacher also provides the diagonal denoising target in \Cref{eq:joint_self_distillation_loss}.
To reduce the cost of simulation, we can later use the learned map to get ${\bf z}_s$, as described in \Cref{sec:implementation}.

\label{sec:prescribed_async_flows}
On the other hand, for a prescribed schedule $\tau(t)$, the time component is fixed:
\begin{equation*}
    \eta_{s,t}
    =\frac{\tau(t)-\tau(s)}{t-s},
    \qquad
    \eta_{t,t}=\dot\tau(t).
\end{equation*}
Only the two-time denoiser needs to be learned.
Moreover, the flow follows $p_{\tau(t)}$ by \Cref{prop:async_flow}, so a training state at any base time $r$ can be sampled directly as ${\bf z}_r=(I_{\tau(r)},\tau(r))$.
On the diagonal, the denoiser is the conditional mean of the clean data ${\bf x}_{\bf1}$.
Replacing its target with data therefore changes the expected loss only by a constant independent of the model:
\begin{restatable}{corollary}{prescribedLossOptimum}
\label{prop:prescribed_loss_optimum}
Let $\tau(t)$ be a prescribed schedule, with $\hat\eta$ fixed to its exact time component and training states sampled from the corresponding interpolants.
Then \Cref{eq:joint_self_distillation_loss} reduces to
\begin{equation}
    \mathcal L(\hat\delta)
    =
    \mathbb E_{s,u,t,{\bf z}}
    \big[
        \mathsf{KL}\big(
            \mathsf{sg}(\mathcal T^\delta_{s,u,t}({\bf z}))
            \,\|\,\hat\delta_{s,t}({\bf z})
        \big)
    \big]
    +
    \mathbb E_{t,{\bf z},{\bf x}_{\bf1}}
    \big[
        \mathsf{KL}\big(
            {\bf x}_{\bf1}
            \,\|\,\hat\delta_{t,t}({\bf z})
        \big)
    \big],
\label{eq:prescribed_self_distillation_loss}
\end{equation}
up to a constant independent of $\hat\delta$.
Its minimizers coincide with the zero-loss solutions of
\Cref{prop:state_loss_optimum} with $\hat\eta$ fixed by the schedule.
\end{restatable}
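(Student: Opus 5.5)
With $\hat\eta$ fixed, I would first show that the two $\eta$-terms of \Cref{eq:joint_self_distillation_loss} vanish identically. For a prescribed schedule, $\eta_{s,t}=(\tau(t)-\tau(s))/(t-s)$ and $\eta_{t,t}=\dot\tau(t)=\omega_t$, since $\omega$ does not depend on the token states. The composition target $\mathcal T^\eta_{s,u,t}$ then telescopes:
\[
\tfrac{u-s}{t-s}\cdot\tfrac{\tau(u)-\tau(s)}{u-s}+\tfrac{t-u}{t-s}\cdot\tfrac{\tau(t)-\tau(u)}{t-u}=\tfrac{\tau(t)-\tau(s)}{t-s}.
\]
This holds irrespective of the argument $\hat F_{s,u}({\bf z})$, because $\hat\eta$ ignores the token state and the time component of $\hat F$ is $\tau(u)$ by \Cref{eq:joint_student_map}. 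Hence both squared-error terms are zero. For the same reason, $\gamma$ in $\mathcal T^\delta$ is the deterministic vector built from $\tau(s),\tau(u),\tau(t)$.

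Next, I would rewrite the diagonal KL term. Training states are ${\bf z}=(I_{\tau(t)},\tau(t))$, which is legitimate by \Cref{prop:async_flow}. On this diagonal, $D_{\tau(t)}(I_{\tau(t)})=\mathbb E[{\bf x}_{\bf1}\mid I_{\tau(t)}]$. Tokenwise,
\[
\mathsf{KL}(p\,\|\,q)=\sum_i\big(\textstyle\sum_v p^i_v\log p^i_v-\sum_v p^i_v\log q^i_v\big).
\]
The entropy part depends only on $D$, so it is a constant in $\hat\delta$. The cross-entropy part is linear in $p$, so the tower property gives
\[
\mathbb E\big[-\textstyle\sum_v D^i_v\log\hat\delta^i_v\big]=\mathbb E\big[-\textstyle\sum_v ({\bf x}_{\bf1}^i)_v\log\hat\delta^i_v\big].
\]
Since ${\bf x}_{\bf1}$ is one-hot, its entropy is zero, so this equals $\mathbb E[\mathsf{KL}({\bf x}_{\bf1}\|\hat\delta_{t,t})]$. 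The two diagonal objectives therefore differ by $\mathbb E[H(D_{\tau(t)}(I_{\tau(t)}))]$, which is independent of $\hat\delta$. The composition term is unchanged, which yields \Cref{eq:prescribed_self_distillation_loss} up to a constant.

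For the minimizer statement, the reduced loss equals the original loss minus a constant $C\ge 0$. The original loss is nonnegative and attains zero at the exact $\delta$ by \Cref{prop:state_loss_optimum}. Hence the infimum of the reduced loss is $-C$, and its minimizers are exactly the zero-loss solutions of the original objective with $\hat\eta$ fixed. I would note one subtlety: because of the stop-gradient, the objective with the $\mathsf{sg}$ target is read as a population loss evaluated at a fixed candidate $\hat\delta$. Under that reading, \Cref{prop:state_loss_optimum} applies directly.

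The main obstacle is the conditional-expectation step. It requires integrability of $\log\hat\delta$, for example strictly positive model outputs, so that the constant is finite. It also requires the diagonal samples to be drawn jointly with ${\bf x}_{\bf1}$ from the interpolant, which is where \Cref{prop:async_flow} is used.
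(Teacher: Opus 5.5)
This is essentially the paper's proof. The $\eta$-terms vanish by the same telescoping of $\mathcal T^\eta_{s,u,t}$, and the diagonal target $D_{\tau(t)}(I_{\tau(t)})$ is swapped for ${\bf x}_{\bf 1}$ by the same entropy/cross-entropy split and tower property (the paper's decomposition of $\mathcal L_\mathsf{CE}$ with $\tau=\tau(t)$); the minimizer claim then follows because a constant shift of a nonnegative loss that vanishes at the exact $\delta$ does not move its minimizers. One sign is flipped, though: since $\mathsf{KL}(D^i\,\|\,\hat\delta^i)=-H(D^i)-\langle D^i,\log\hat\delta^i\rangle$, the reduced loss equals the original loss \emph{plus} $C=\mathbb E\bigl[\sum_i H(D^i_{\tau(t)}(I_{\tau(t)}))\bigr]$, so its minimum is $C$, not $-C$ (it is a sum of KL divergences and cannot be negative), and the identification of minimizers is unaffected.
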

A proof is in \Cref{app:prescribed_distillation}.
Thus, a prescribed schedule allows the flow map to be trained from scratch without a pretrained teacher.

\paragraph{Inference.}
We generate samples by applying the learned flow map on a temporal grid of base times $0=t_0<\cdots<t_N=1$:
\begin{equation}
\begin{aligned}
    \hat{\bf z}_{t_0}
    &=({\bf x}_0,\mathbf0),
    \qquad {\bf x}_0\sim p_{\bf0},\\
    \hat{\bf z}_{t_{n+1}}
    &=\hat F_{t_n,t_{n+1}}(\hat{\bf z}_{t_n}),
    \qquad n=0,\ldots,N-1.
\end{aligned}
\end{equation}
The final tokens are obtained by applying tokenwise $\mathsf{argmax}$ to
$\hat\delta_{t_{N-1},1}(\hat{\bf z}_{t_{N-1}})$.
For $N=1$, a single evaluation of $\hat\delta_{0,1}$ generates the entire sequence without evaluating the scheduler.
For supervised tasks, this allows us to train with a prescribed schedule for each example, even when the corresponding schedule is unavailable for a test input.
These schedules guide training, and the one-step model internalizes them to generate its output from the task input and initial noise alone.

\subsection{Algorithmic details}
\label{sec:implementation}

\paragraph{Scheduler implementation.}
We remark that a scheduler must send $\tau_t\to {\bf1}$ as $t\uparrow 1$ \Cref{eq:state_dependent_schedule}.
We implement the schedulers in a common form that ensures this by construction (\Cref{app:schedule_construction}):
\begin{equation}
    \omega_t^i({\bf x},\tau)
    =\frac{1-\tau^i}{1-t}\,k_t^i({\bf x},\tau),
    \label{eq:common_schedule_rate}
\end{equation}
where the factor $k_t^i>0$ controls how quickly position $i$ advances and can be chosen almost freely.
Setting $k_t^i=1$, for instance, recovers the synchronous schedule $\tau_t^i=t$ from $\tau_0=\mathbf0$.
For learned schedules, we assign higher factors to positions that the denoiser predicts confidently, using two rules:
\begin{itemize}[leftmargin=*]
    \item \textbf{Top-$K$ confidence scheduler}
    prioritizes the $K$ most confident positions, with $K$ increasing over time~\citep{chang2022maskgit}.
    \item \textbf{Entropy-bounded scheduler}
    uses an entropy budget to determine how many high-confidence positions advance faster, allowing more positions to advance together when their predictions are certain~\citep{benhamu2025accelerated}.
\end{itemize}
These schedules are constructed as continuous analogues of unmasking strategies used in MDMs, and their details along with more such examples can be found in \Cref{app:schedule_construction}.

\paragraph{Denoiser and time heads.}
For flow map distillation under learned schedules, we use one model with a denoiser head for $\hat\delta_{s,t}$ and a time head for $\hat\eta_{s,t}$.
To keep predicted tokenwise times in $[\tau_s,\mathbf1]$, we parameterize the time head via the effective rate $\kappa_{s,t}\coloneqq \log\frac{\mathbf1-\tau_s}{\mathbf1-\tau_t}/ \log\frac{1-s}{1-t}$, which is a weighted average of $k$ along the trajectory
(see \Cref{app:effective_tokenwise_time_rate} for details).
The head outputs $\hat\kappa_{s,t}>0$, predicting
\begin{equation}
    \hat{\tau}_t = \tau_s+(t-s)\hat\eta_{s,t}({\bf z}_s)
    =
    \mathbf1-(\mathbf1-\tau_s)\odot
    \left(\frac{1-t}{1-s}\right)^{\hat\kappa_{s,t}({\bf z}_s)}.
    \label{eq:joint_clock_parameterization}
\end{equation}
Each transition advances the tokenwise times toward $\mathbf1$ and reaches $\mathbf1$ at $t=1$.
On the diagonal, matching $\hat\kappa_{t,t}$ to $k_t$ is equivalent to matching $\hat\eta_{t,t}$ to $\omega_t$
(see \Cref{app:effective_tokenwise_time_rate} for details).

We initialize the backbone and denoiser head from the teacher $D_\tau$ and attach the time head to the final hidden features.
The time head exponentiates its output to ensure positivity, and we initialize its output before exponentiation to zero, setting $\hat\kappa_{s,t}={\bf1}$ at initialization.

\paragraph{Time conditioning.}
With tokenwise time, both the teacher denoiser and the flow map receive a separate time embedding at each position, obtained by applying a shared embedding layer to $\tau^i$.
For the flow map, we additionally embed the base times $s$ and $t$ with zero-initialized layers and add their outputs to the embeddings of $\tau_s$, following \citet{lee2026flowmaplanguagemodels}.
The resulting conditioning vectors modulate each block through adaptive layer normalization (adaLN)~\citep{peebles2023scalable}.

\paragraph{Rollout curriculum.}
To train with \Cref{eq:joint_self_distillation_loss}, we initially obtain training states ${\bf z}_s$ by rolling out the teacher under the chosen scheduler from ${\bf z}_0=({\bf x}_0,\mathbf0)$ to time $s$.
After the first round, we use the current student flow map $\hat F$ for rollouts and halve the number of rollout steps at successive rounds.
The fixed teacher denoiser and scheduler continue to provide the diagonal targets throughout training.

\paragraph{MDM initialization.}
Diffusion forcing requires the denoiser to denoise across many combinations of tokenwise noise levels, which can make training demanding~\citep{kim2025train}.
When each tokenwise time is close to either $0$ or $1$, the task resembles masked token prediction, since each position is nearly pure noise or nearly clean.
To exploit this overlap, we optionally initialize the denoiser from a trained MDLM~\citep{sahoo2024simple}, following \citet{agarwal2026posteriorrefinementfastlanguage}.

\paragraph{Noise downscaling.}
Following \citet{chavan2026large}, we use initial noise ${\bf x}_{\bf0}\sim\mathsf N(0,\sigma^2I)$ with $\sigma<1$ during flow map distillation and inference.
In supervised tasks where one accurate answer is sufficient, reducing variation in the initial states can ease the burden of fitting the flow map.
We apply this on GSM8K, where it improves one-step accuracy (see \Cref{fig:gsm8k_teacher_ablation}).

%% file: sections/experiments.tex
\section{Experiments}
\label{sec:experiments}
Throughout the experiments, we address these key questions:
\begin{enumerate}[leftmargin=*,label=\textbf{Q\arabic*.}]
    \setlength{\itemsep}{2pt}
    \setlength{\parskip}{0pt}
    \item Can a proper choice of asynchronous scheduler improve the generation quality of \emph{a}FLMs?
\item Does this improvement transfer to one-step generation with \ours{} after distillation?
    \item Do hidden computations of \ours{}s reflect the schedules of \emph{a}FLMs they are distilled from?
\end{enumerate}

\paragraph{Setup.}
We evaluate our method on Sudoku solving and code generation for GSM8K math word problems.
For Sudoku, we use the dataset with three difficulty levels and report exact solving accuracy (\%) on the full test set, following~\citet{deschenaux2026language}.
For code generation, we train on code completions from TinyGSM \citep{liu2023tinygsm} and measure zero-shot accuracy (\%) on the full test set of GSM8K \citep{cobbe2021training} math problems, following~\citet{kim2025train}.

For the baselines, we consider 
discrete methods MDLM \citep{sahoo2024simple}, Duo \citep{sahoo2025diffusionduality} and AlphaDLM \citep{gushchin2026alphadlm}, and continuous methods CANDI \citep{pynadath2026candihybriddiscretecontinuousdiffusion}, FLM \citep{lee2026flowmaplanguagemodels}, S-FLM \citep{deschenaux2026language}, FMLM \citep{lee2026flowmaplanguagemodels}, FMLM+ \citep{agarwal2026posteriorrefinementfastlanguage}, and DBTM \citep{tang2026discretebeckmanntransportmodels}.
We train and distill for 250k steps on GSM8K and 20k on Sudoku in total, matching the baselines.
We leave further details in \Cref{app:exp_details}.

\subsection{Asynchronous schedulers for \texorpdfstring{$a$}{a}FLM (Q1)}
\label{sec:exp_scheduling_rules}

\begin{figure}[t]
    \centering
    \vspace{-1em}
    \includegraphics[width=\textwidth]{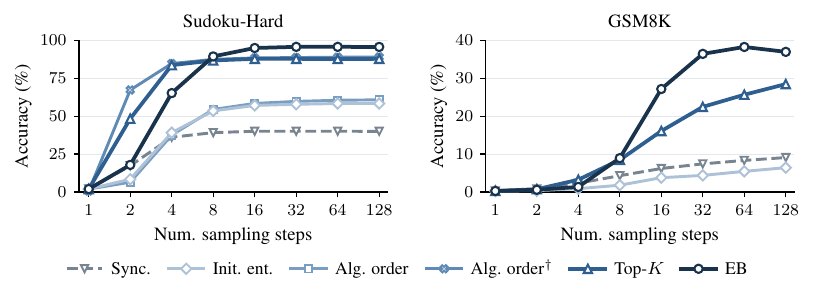}
    \vspace{-2.5em}
    \caption{Accuracy of \emph{a}FLM under different inference-time schedulers on Sudoku-Hard (left) and GSM8K (right). A well-chosen scheduler improves performance over the synchronous one (Sync.). Alg. order$^\dagger$ denotes the \emph{a}FLM whose denoiser is trained only along the algorithmic order.}
    \label{fig:scheduled_inference}
\end{figure}

For each dataset, we train one asynchronous denoiser with diffusion forcing \citep{chen2024diffusion} and apply various schedulers during inference.
We compare the synchronous schedule (Sync.) of FLM \citep{lee2026flowmaplanguagemodels}, a schedule fixed by the initial prediction entropy ({Init. ent.}) of \citet{bansal2026token}, and learned schedulers including the top-$K$ confidence scheduler ({Top-$K$}) and the entropy-bounded scheduler ({EB}) defined in \Cref{sec:implementation}. 
To test whether generation can also benefit from a prescribed order, we include an algorithmic order (Alg. order) on Sudoku. In this order, the cells that a constraint solver resolves earlier are denoised faster \citep{shah2024causal} (\Cref{app:sudoku_order}).

The results are in \Cref{fig:scheduled_inference}, showing that learned schedulers substantially improve generation accuracy on both tasks.
With 32-step generation, the EB scheduler improves the accuracy over Sync. from 40.1\% to 95.6\% on Sudoku-Hard and from 7.4\% to 36.4\% on GSM8K.
Also, schedulers that adapt the order to intermediate predictions (Top-$K$ and EB) outperform the scheduler that fixes the order from the initial prediction (Init. ent.).
Asynchronous flows thus let flow language models benefit from a good generation order, with the largest gains from learned schedulers ported from MDMs.

While the prescribed algorithmic order improves accuracy on Sudoku-Hard from 40.1\% to 59.7\%, it still falls short of the learned schedulers.
We conjecture that this gap arises because diffusion forcing cannot learn all combinations of tokenwise times equally well \citep{kim2025train} (\Cref{sec:implementation}).
Learned schedulers follow the confidence of the denoiser and thus avoid the poorly learned combinations, whereas the algorithmic order may still visit them.
Indeed, training the denoiser only along the algorithmic order raises the accuracy from 59.7\% to 88.6\% (Alg. order$^\dagger$ in \Cref{fig:scheduled_inference}).

\subsection{One-step generation with \texorpdfstring{$a$}{a}FMLM (Q2)}
\label{sec:exp_flow_maps}

\begin{table}[t!]
    \centering
    \vspace{-1em}
    \caption{One-step exact solving accuracy (\%) on Sudoku and zero-shot accuracy (\%) on GSM8K (see \Cref{app:baseline_details} for baseline details).
    For \emph{a}FLM, the schedule does not matter in one-step generation.
    }
    \label{tab:main_results_split}
    \footnotesize
    \setlength{\tabcolsep}{4pt}
    \renewcommand{\arraystretch}{1.0}
    \begin{tabular}{@{}llcccc@{}}
        \toprule
        & & \multicolumn{3}{c}{Sudoku} & GSM8K \\
        \cmidrule(lr){3-5}\cmidrule(l){6-6}
        \multicolumn{2}{@{}l}{Model} & Easy & Medium & Hard & Accuracy \\
        \midrule
        \multicolumn{2}{@{}l}{MDLM \citep{sahoo2024simple}} & 63.6 & 16.2 & 1.7 & 0.0 \\
        \multicolumn{2}{@{}l}{Duo \citep{sahoo2025diffusionduality}} & 69.8 & 24.6 & 2.4 & 0.0 \\
        \multicolumn{2}{@{}l}{CANDI \citep{pynadath2026candihybriddiscretecontinuousdiffusion}} & 69.0 & 36.1 & 4.0 & 0.0 \\
        \multicolumn{2}{@{}l}{FLM \citep{lee2026flowmaplanguagemodels}} & 69.1 & 21.5 & 2.8 & 0.0 \\
        \multicolumn{2}{@{}l}{S-FLM \citep{deschenaux2026language}} & 77.2 & 36.0 & 5.2 & 0.0 \\
        \multicolumn{2}{@{}l}{FMLM \citep{lee2026flowmaplanguagemodels}} & 75.0 & 42.2 & 4.9 & 0.0 \\
        \multicolumn{2}{@{}l}{FMLM+ \citep{agarwal2026posteriorrefinementfastlanguage}} & 67.4 & 29.9 & 4.1 & 0.3 \\
        \multicolumn{2}{@{}l}{DBTM \citep{tang2026discretebeckmanntransportmodels}} & -- & -- & -- & 0.8 \\
        \multicolumn{2}{@{}l}{AlphaDLM \citep{gushchin2026alphadlm}} & -- & -- & -- & 6.3 \\
        \midrule
        \multicolumn{2}{@{}l}{\textbf{\emph{a}FLM}} & 71.1 & 26.5 & 6.1 & 0.3 \\
        \midrule
        \multirow{4}{*}{\textbf{\ours{}}}
        & Sync. & 77.1 & 36.8 & 6.4 & 1.6 \\
        & EB & 79.1 & \textbf{47.8} & 11.8 & \textbf{11.4} \\
        & Top-$K$ & 81.4 & 46.4 & \textbf{14.4} & 10.5 \\
        & Alg. order & \textbf{82.5} & 44.5 & 10.0 & -- \\
        \bottomrule
    \end{tabular}
    \vspace{-1.4em}
\end{table}

\begin{figure}[t!]
    \centering
    \begin{minipage}[t]{0.49\linewidth}
        \centering
        \vspace{0pt}
        \includegraphics[width=\linewidth]{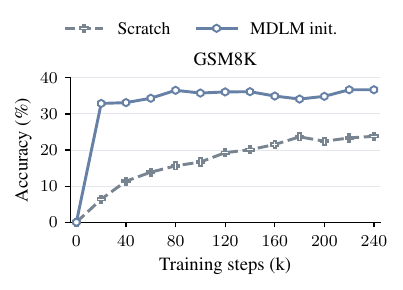}
        \vspace{-2.5em}
        \caption{GSM8K accuracy (\%) of the teacher \emph{a}FLM~(EB) with 32-step generation across training steps, with and without MDLM initialization.}
        \label{fig:mdlm_init_training}
    \end{minipage}\hfill
    \begin{minipage}[t]{0.49\linewidth}
        \centering
        \vspace{0pt}
        \includegraphics[width=\linewidth]{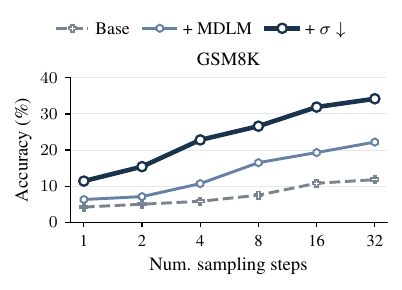}
        \vspace{-2.5em}
        \caption{GSM8K accuracy (\%) of \ours{}~(EB). ``+ MDLM'' initializes the teacher from MDLM, and ``+ $\sigma\!\downarrow$'' adds noise downscaling.}
        \label{fig:gsm8k_teacher_ablation}
    \end{minipage}
    \vspace{-1em}
\end{figure}

We distill \emph{a}FLM from \Cref{sec:exp_scheduling_rules} with EB, Top-$K$ and synchronous schedulers into \ours{}~(EB), \ours{}~(Top-$K$), and \ours{}~(Sync.), respectively, and test their one-step generation performance.

The results are shown in \Cref{tab:main_results_split}.
\ours{}~(EB) improves the one-step accuracy over its \emph{a}FLM teacher, 11.8\% vs. 6.1\% on Sudoku-Hard and 11.4\% vs. 0.3\% on GSM8K.
In contrast, \ours{}~(Sync.), which is distilled from the same teacher, reaches considerably lower performance, showing that the benefit of asynchronous schedulers carries over to one-step generation.
\ours{}~(Top-$K$) shows a similar gain and achieves the highest one-step accuracy on Sudoku-Hard, which shows that the benefit is not specific to a particular choice of the scheduler.
Through these advantages, \ours{} surpasses all one-step baselines on both Sudoku and GSM8K and establishes the new state of the art.

\begin{wraptable}[7]{r}{0.38\textwidth}
    \centering
    \vspace{-2em}
    \caption{Training cost (GPU hours) on Sudoku-Hard.}
    \label{tab:training_cost}
    \footnotesize
    \setlength{\tabcolsep}{3pt}
    \begin{tabular}{@{}lccc@{}}
        \toprule
        Schedule & Teacher & Distillation & Total \\
        \midrule
        Learned & 1.7h & 3.3h & 5.0h \\
        Prescribed & 0h & 3.2h & 3.2h \\
        \bottomrule
    \end{tabular}
\end{wraptable}

In addition, as in \Cref{prop:prescribed_loss_optimum}, \ours{} can be trained along a prescribed schedule from scratch without a pretrained teacher.
We show this with the algorithmic order on Sudoku.
As in \Cref{tab:main_results_split}, \ours{}~(Alg.~order) achieves 10.0\% one-step accuracy on Sudoku-Hard without a teacher, compared with 11.8\% of \ours{}~(EB).
As shown in \Cref{tab:training_cost}, this approach reduces the total training cost from 5 to 3.2 GPU hours by eliminating teacher pretraining.
Our method effectively distills both prescribed and learned schedules, and a useful prescribed schedule offers a more efficient route to one-step generation.

In \Cref{fig:mdlm_init_training,fig:gsm8k_teacher_ablation}, we analyze the role of our algorithmic choices (\Cref{sec:implementation}) of initializing the \emph{a}FLM teacher from MDLM and noise downscaling for distillation. Initializing the teacher from MDLM accelerates its training, where it reaches 32.9\% accuracy on GSM8K with 32-step generation after only 20k training steps, compared with 6.4\% from scratch. 
This suggests MDM initialization eases diffusion forcing training, as the combinations of tokenwise times close to zero or one resemble the problem solved by MDMs.
Distilling this stronger teacher improves the one-step accuracy of \ours{}~(EB) on GSM8K from 4.2\% to 6.3\%. Noise downscaling for distillation further improves \ours{} from 6.3\% to 11.4\% accuracy, confirming that it reduces the burden of fitting the flow map.

\begin{figure}[t]
    \centering
    \vspace{-2em}
    \includegraphics[width=0.95\textwidth]{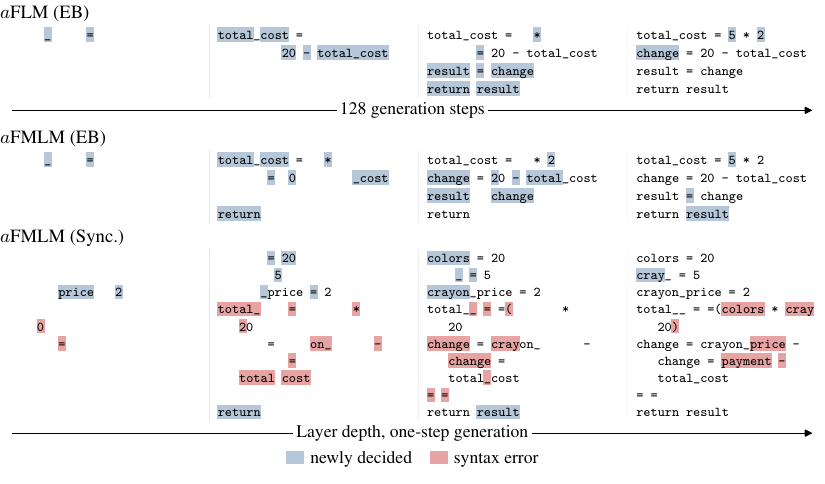}
    \vspace{-1.5em}
    \caption{A visualization of code solutions to a GSM8K problem generated by one-step \ours{}~(EB) (middle) and \ours{}~(Sync.) (bottom) in their layer-wise order, compared with the generation order of \emph{a}FLM~(EB) (top). Columns show when 10\%, 50\%, 85\%, and 100\% of the tokens are decided.}
    \vspace{-1em}
    \label{fig:exp_internalization}
\end{figure}

\vspace{-0.5em}
\subsection{Internalization of the schedule into \texorpdfstring{$a$}{a}FMLM (Q3)}
\label{sec:exp_internalization}

\begin{wraptable}[7]{r}{0.38\linewidth}
    \centering
    \vspace{-2em}
    \caption{Spearman correlation with the schedule-induced ordering.}
    \label{tab:exp_internalization}
    \footnotesize
    \setlength{\tabcolsep}{4pt}
    \begin{tabular}{@{}lcc@{}}
        \toprule
        Schedule & Sudoku-Hard & GSM8K \\
        \midrule
        Synchronous & 0.500 & 0.091 \\
        Asynchronous & \textbf{0.516} & \textbf{0.271} \\
        \bottomrule
    \end{tabular}
\end{wraptable}
We analyze the hidden computation of one-step \ours{} using logit lens \citep{belrose2023eliciting}, which decodes the hidden state of each layer using the output head.
For each token, we record the earliest layer from which its decoded prediction matches the final output and remains so, and compare this layer-wise order with the order induced by the schedule.
For Sudoku, we compare with the algorithmic order, and for GSM8K, we compare with the order in which the \emph{a}FLM teacher under the EB scheduler decides the tokens.
In \Cref{tab:exp_internalization}, we measure the alignment between the two orders by the Spearman correlation, computed per sample and averaged over the test set.
On both tasks, \ours{} shows a higher correlation with its schedule than \ours{}~(Sync.), which indicates that \ours{} internalizes the schedule into its hidden computation. This is consistent with the observation of \citet{asiimwe2026depth} in image generation that flow maps compress the flow across timesteps into their layers.
We further visualize an example in \Cref{fig:exp_internalization}.
\ours{}~(Sync.) does not follow the order of the teacher and produces non-parsable lines, showing that the order matters.
In contrast, \ours{}~(EB) resolves the tokens that the teacher generates early in its early layers and produces a parsable solution. 

%% file: sections/conclusion.tex
\section{Conclusion}
We introduced asynchronous flow map language models (\ours{}s), which distill the ordered generation of flow language models into one-step generators.
To this end, we modeled asynchronous generation with an asynchronous flow language model (\emph{a}FLM), which evolves a coupled flow of tokens and their tokenwise times. 
We then derived a self-distillation objective that learns the flow map of this flow, which jointly integrates both over any time interval.
Empirically, a well-designed scheduler substantially improves \emph{a}FLMs, and \ours{}s retain these advantage in one-step generation without evaluating the scheduler at test time. We also showed that \ours{}s internalize the order of the scheduler into their hidden computation. Promising future directions include jointly optimizing the scheduler with the denoiser and scaling \ours{}s to larger continuous language models.

\subsection*{Acknowledgments}
This work is supported by the National Research Foundation of Korea (RS-2024-00351212) and the Institute of Information \& communications Technology Planning \& Evaluation (IITP) (RS-2024-00509279, RS-2026-25518317, and RS-2026-25617720), funded by the Korean Government (MSIT). Satoshi Hayakawa was partially supported by JSPS KAKENHI (JP26K25548) and JST PRESTO (JPMJPR26K8).

%% file: sections/appendix_proofs.tex
\section{Proofs and Derivations}
Throughout, we assume that the velocity fields in this section are regular enough for their ODEs and continuity equations to have unique solutions on $[0,1)$, and that prescribed schedules are absolutely continuous.
We also assume $\omega^i_t>0$ whenever $0<t<1$ and $\tau^i_t<1$, and that the tokenwise times reach ${\bf 1}$ at $t=1$, which holds for the schedulers of \Cref{eq:common_schedule_rate} with $k^i_t>0$ for $0<t<1$ and bounded away from zero near $t=1$.
Quantities at $t=1$ are understood as limits $t\uparrow1$, and diagonal quantities at $t=s$ as limits $t\downarrow s$.

\subsection{Proof of Proposition \ref{prop:monotone}}
\label{app:monotone}
\monotone*

\begin{proof}
We parameterize each path by $r\in[0,1]$.

\paragraph{Scalar time.}
Let $t(r)$ be a continuous nondecreasing path with
\begin{equation*}
t(0)=0,\qquad t(1)=1.
\end{equation*}
Then $t(r)$ passes through every value in $[0,1]$ in order, so it is a reparameterization of $t(r)=r$.

\paragraph{Tokenwise times.}
It suffices to exhibit one infinite family of such paths that remain distinct up to reparameterization.

For $L\geq2$ and $a>0$, the path $\tau_a(r)=(r^a,r,\ldots,r)$ goes from $\mathbf0$ to $\mathbf1$ and satisfies $\tau_a^1=(\tau_a^2)^a$ along the way.

Since this relation does not change under reparameterization and differs across $a$, the paths $\tau_a$ are infinitely many and distinct up to reparameterization.
\end{proof}

\subsection{Proof of Proposition \ref{prop:exact_flow}}
\label{app:exact_flow}

\exactflow*

\begin{proof}
For each $({\bf x}_{\bf 0},{\bf x}_{\bf 1})$, let $\tau_t$ solve $\dot\tau_t=\omega_t(I_{\tau_t},\tau_t)$ with $\tau_0={\bf 0}$, and let $Z_t\coloneqq(I_{\tau_t},\tau_t)$.
Since position $i$ of the interpolant is $I^i_{\tau_t}=(1-\tau^i_t){\bf x}^i_{\bf 0}+\tau^i_t{\bf x}^i_{\bf 1}$, differentiating in $t$ gives
\begin{equation*}
\frac{\rm d}{{\rm d}t}I_{\tau_t}=\dot\tau_t({\bf x}_{\bf 1}-{\bf x}_{\bf 0}),
\qquad\text{so}\qquad
\dot Z_t=\bigl(\omega_t(Z_t)({\bf x}_{\bf 1}-{\bf x}_{\bf 0}),\ \omega_t(Z_t)\bigr).
\end{equation*}
Since $\omega_t(Z_t)$ is a function of $Z_t$, taking the conditional mean given $Z_t=({\bf x},\tau)$ and using \Cref{eq:exact_velocity} gives
\begin{equation*}
\begin{aligned}
\mathbb E[\dot Z_t\mid Z_t=({\bf x},\tau)]
&=\bigl(\omega_t({\bf x},\tau)\,\mathbb E[{\bf x}_{\bf 1}-{\bf x}_{\bf 0}\mid I_{\tau_t}={\bf x},\tau_t=\tau],\ \omega_t({\bf x},\tau)\bigr)\\
&=\bigl(\omega_t({\bf x},\tau)\,b^\omega_t({\bf x},\tau),\ \omega_t({\bf x},\tau)\bigr)
\eqqcolon u_t({\bf x},\tau).
\end{aligned}
\end{equation*}
Hence, for any smooth test function $\varphi$ with compact support, the law $q_t$ of $Z_t$ satisfies
\begin{equation*}
\frac{\rm d}{{\rm d}t}\mathbb E[\varphi(Z_t)]
=\mathbb E\bigl[\nabla\varphi(Z_t)\cdot\dot Z_t\bigr]
=\mathbb E\bigl[\nabla\varphi(Z_t)\cdot u_t(Z_t)\bigr],
\end{equation*}
so $q_t$ is a weak solution of the continuity equation
\begin{equation*}
\partial_t q_t+\nabla\cdot(q_t\,u_t)=0,\qquad q_0=p_{\bf 0}\otimes\delta_{\bf 0}.
\end{equation*}
The law of $({\bf x}_t,\tau_t)$ in the proposition solves the same equation with the same initial condition, since its velocity is $u_t$.
Since this solution is unique, the two laws coincide for every $t\in[0,1)$.
Finally, $\tau_t\to{\bf 1}$ as $t\uparrow1$, so $I_{\tau_t}\to{\bf x}_{\bf 1}$ for every $({\bf x}_{\bf 0},{\bf x}_{\bf 1})$.
Since ${\bf x}_t$ has the law of $I_{\tau_t}$, it converges in distribution to $p_{\bf 1}$.
\end{proof}

\subsection{Proof of Proposition \ref{prop:df_optimum}}
\label{app:df_optimum}
\dfoptimum*

\begin{proof}
Since ${\bf x}_{\bf 1}^i$ is the one-hot encoding of ${\bf y}^i$, the loss at position $i$ can be written as an inner product,
\begin{equation*}
\left[-\log \hat D_\tau^i(I_\tau)\right]_{{\bf y}^i}
=-\left\langle {\bf x}_{\bf 1}^i,\log \hat D_\tau^i(I_\tau)\right\rangle.
\end{equation*}
Conditioning on $(I_\tau,\tau)$ and using $\mathbb E[{\bf x}_{\bf 1}\mid I_\tau,\tau]=D_\tau(I_\tau)$, which holds because $\tau$ is independent of $({\bf x}_{\bf 0},{\bf x}_{\bf 1})$, its expectation is
\begin{equation*}
\mathbb E\left[-\left\langle {\bf x}_{\bf 1}^i,\log \hat D_\tau^i(I_\tau)\right\rangle\right]
=\mathbb E\left[-\left\langle \mathbb E[{\bf x}_{\bf 1}^i\mid I_\tau, \tau],\log \hat D_\tau^i(I_\tau)\right\rangle\right]
=\mathbb E\left[-\left\langle D_\tau^i(I_\tau),\log \hat D_\tau^i(I_\tau)\right\rangle\right].
\end{equation*}
Adding and subtracting $\langle D_\tau^i(I_\tau),\log D_\tau^i(I_\tau)\rangle$ and summing over positions gives
\begin{equation}
\mathcal L_\mathsf{CE}(\hat D)
=\mathbb E\left[\sum_{i=1}^L H(D_\tau^i(I_\tau))\right]
+\mathbb E\left[\sum_{i=1}^L\mathsf{KL}\!\left(
D_\tau^i(I_\tau)\,\|\,\hat D_\tau^i(I_\tau)\right)\right],
\label{eq:async_ce_decomposition}
\end{equation}
where $H(p)=-\langle p,\log p\rangle$ is the entropy.
The first term does not depend on $\hat D$, and the second term is nonnegative and vanishes exactly when $\hat D_\tau^i(I_\tau)=D_\tau^i(I_\tau)$ for every $i$.
Thus, $\hat D^\star$ minimizes $\mathcal L_\mathsf{CE}$ if and only if
\begin{equation}
\hat D_\tau^\star(I_\tau)=D_\tau(I_\tau)
\label{eq:async_ce_minimizer}
\end{equation}
almost surely under the training distribution.
\end{proof}

\subsection{Support of the diffusion forcing training distribution}
\label{app:df_support}
We show that the training distribution of $(I_\tau,\tau)$ in \Cref{eq:df_loss} has support $\mathbb R^{L\times|V|}\times[0,1]^L$, so that every state $({\bf x}_t,\tau_t)$ of an asynchronous flow lies in it regardless of the scheduler.

\begin{proof}
Fix $\tau\in[0,1)^L$ and ${\bf x}_{\bf 1}$.
Since ${\bf x}_{\bf 0}\sim\mathsf N(0,I)$ is independent of ${\bf x}_{\bf 1}$ and $\tau$, each position of the interpolant is distributed as
\begin{equation*}
I^i_\tau=(1-\tau^i){\bf x}^i_{\bf 0}+\tau^i{\bf x}^i_{\bf 1}
\sim\mathsf N\big(\tau^i{\bf x}^i_{\bf 1},\,(1-\tau^i)^2I\big),
\end{equation*}
independently across positions.
Since $1-\tau^i>0$, $I_\tau$ has a positive density on $\mathbb R^{L\times|V|}$.
Moreover, each $\tau^i\sim\mathsf U[0,1]$ independently, so $\tau$ has a positive density on $[0,1)^L$.
The joint distribution of $(I_\tau,\tau)$ therefore has a positive density on $\mathbb R^{L\times|V|}\times[0,1)^L$, and its support is the closure of this set,
\begin{equation*}
\overline{\mathbb R^{L\times|V|}\times[0,1)^L}=\mathbb R^{L\times|V|}\times[0,1]^L,
\end{equation*}
which contains every state $({\bf x}_t,\tau_t)$ of an asynchronous flow.
\end{proof}

\subsection{Proof of Corollary \ref{prop:async_flow}}
\label{app:async_flow}

\asyncflow*

\begin{proof}
Since $\omega$ does not depend on ${\bf x}$, every interpolant follows the same tokenwise times $\tau_t=\tau(t)$ with $\omega_t=\dot\tau(t)$.
The condition $\tau_t=\tau(t)$ in \Cref{eq:exact_velocity} therefore holds for every $({\bf x}_{\bf 0},{\bf x}_{\bf 1})$, and 
\begin{equation*}
    b^\omega_t({\bf x},\tau(t))=\mathbb E[{\bf x}_{\bf 1}-{\bf x}_{\bf 0}\mid I_{\tau(t)}={\bf x}]=b_{\tau(t)}({\bf x}).
\end{equation*}
Substituting into \Cref{eq:state_dependent_dynamics} gives \Cref{eq:async_flow}, whose velocity $\omega_t b_{\tau(t)}=\omega_t b^\omega_t$ coincides with that of \Cref{prop:exact_flow}.
Hence, ${\bf x}_t$ has the law of $I_{\tau(t)}$, which is $p_{\tau(t)}$ for every $t\in[0,1)$, and ${\bf x}_t$ converges in distribution to $p_{\bf 1}$ as $t\uparrow1$.
\end{proof}

\subsection{Proof of Proposition \ref{prop:state_async_flow_map}}
\label{app:state_async_flow_map}

\stateasyncflowmap*

\begin{proof}
Let $({\bf x}_r,\tau_r)= F_{s,r}({\bf z})$, where ${\bf z}=({\bf x},\tau)$. Integrating the coupled dynamics \Cref{eq:state_dependent_dynamics} from $s$ to $t$ gives the first identity.

For $s\leq u\leq t<1$, both $r\mapsto F_{s,r}({\bf z})$ and $r\mapsto F_{u,r}( F_{s,u}({\bf z}))$ solve the coupled dynamics on $[u,t]$ with the same state $F_{s,u}({\bf z})$ at $r=u$. Uniqueness of solutions therefore gives $F_{s,t}({\bf z})=F_{u,t}(F_{s,u}({\bf z}))$.
\end{proof}

\subsection{Proof of Proposition \ref{prop:state_two_time_denoiser}}
\label{app:state_two_time_denoiser}

\statetwotimedenoiser*

\begin{proof}
Throughout, let ${\bf z}=({\bf x},\tau)$ and $({\bf x}_r,\tau_r)= F_{s,r}({\bf z})$ for $r\geq s$, so that $({\bf x}_s,\tau_s)=({\bf x},\tau)$.
By \Cref{eq:joint_average_velocities}, the averages $v_{s,t}$ and $\eta_{s,t}$ satisfy
\begin{align}
(t-s)\,v_{s,t}({\bf z})
&=\int_s^t \omega_r({\bf x}_r,\tau_r)\, b_{\tau_r}({\bf x}_r)\,{\rm d}r
={\bf x}_t-{\bf x},\nonumber\\
(t-s)\,\eta_{s,t}({\bf z})
&=\int_s^t \omega_r({\bf x}_r,\tau_r)\,{\rm d}r
=\tau_t-\tau.
\label{eq:state_displacements}
\end{align}
Substituting \Cref{eq:state_displacements} into \Cref{eq:state_two_time_denoiser},
\begin{equation}
\delta_{s,t}^i({\bf z})
={\bf x}^i+\frac{1-\tau^i}{\eta_{s,t}^i({\bf z})}v_{s,t}^i({\bf z})
={\bf x}^i+\frac{1-\tau^i}{\tau^i_t-\tau^i}
\bigl({\bf x}_t^i-{\bf x}^i\bigr).
\label{eq:state_proof_definition}
\end{equation}
Solving \Cref{eq:state_proof_definition} for ${\bf x}_t$ and \Cref{eq:state_displacements} for $\tau_t$ proves (i).

Next, we write $\delta_{s,t}$ as a weighted average of the denoiser along the trajectory. Recall the coupled dynamics,
\begin{equation*}
\dot\tau_r^i=\omega_r^i({\bf x}_r,\tau_r),\qquad
\dot{\bf x}_r^i=\frac{\omega_r^i({\bf x}_r,\tau_r)}{1-\tau_r^i}
\bigl(D_{\tau_r}^i({\bf x}_r)-{\bf x}_r^i\bigr).
\end{equation*}
The product rule gives
\begin{align}
\frac{\rm d}{{\rm d}r}\frac{{\bf x}_r^i}{1-\tau_r^i}
&=\frac{\dot{\bf x}_r^i}{1-\tau_r^i}
+\frac{\omega_r^i({\bf x}_r,\tau_r){\bf x}_r^i}{(1-\tau_r^i)^2}\nonumber\\
&=\frac{\omega_r^i({\bf x}_r,\tau_r)}{(1-\tau_r^i)^2}D_{\tau_r}^i({\bf x}_r).
\label{eq:state_integrating_factor}
\end{align}
Integrating \Cref{eq:state_integrating_factor} and using the tokenwise time dynamics,
\begin{align}
\frac{{\bf x}_t^i}{1-\tau_t^i}-\frac{{\bf x}^i}{1-\tau^i}
&=\int_s^t\frac{\omega_r^i({\bf x}_r,\tau_r)}{(1-\tau_r^i)^2}D_{\tau_r}^i({\bf x}_r)\,{\rm d}r,
\label{eq:state_integrated_velocity}\\
\int_s^t\frac{\omega_r^i({\bf x}_r,\tau_r)}{(1-\tau_r^i)^2}\,{\rm d}r
&=\frac1{1-\tau_t^i}-\frac1{1-\tau^i}
=\frac{\tau_t^i-\tau^i}{(1-\tau^i)(1-\tau_t^i)}.
\label{eq:state_weight_normalization}
\end{align}
Substituting \Cref{eq:state_integrated_velocity,eq:state_weight_normalization} into \Cref{eq:state_proof_definition},
\begin{equation}
\delta_{s,t}^i({\bf z})
=\frac{\displaystyle\int_s^t\frac{\omega_r^i({\bf x}_r,\tau_r)}{(1-\tau_r^i)^2}D_{\tau_r}^i({\bf x}_r)\,{\rm d}r}
{\displaystyle\int_s^t\frac{\omega_r^i({\bf x}_r,\tau_r)}{(1-\tau_r^i)^2}\,{\rm d}r}
\in\Delta^{|V|-1}.
\label{eq:state_weighted_denoiser}
\end{equation}
For (ii), substituting the diagonal of \Cref{eq:joint_average_velocities} into \Cref{eq:state_two_time_denoiser} gives
\begin{equation*}
\delta_{t,t}({\bf z})={\bf x}+\frac{\mathbf1-\tau}{\omega_t({\bf x},\tau)}\,\omega_t({\bf x},\tau)\,b_\tau({\bf x})=D_\tau({\bf x}),
\qquad
\eta_{t,t}({\bf z})=\omega_t({\bf x},\tau).
\end{equation*}
These diagonal values are also the limits of the off-diagonal ones. Since the weights in \Cref{eq:state_weighted_denoiser} are nonnegative,
\begin{equation*}
\|\delta_{s,t}^i({\bf z})-D_\tau^i({\bf x})\|
\leq\sup_{r\in[s,t]}\|D_{\tau_r}^i({\bf x}_r)-D_\tau^i({\bf x})\|
\xrightarrow[t\downarrow s]{}0,
\end{equation*}
and \Cref{eq:state_displacements} gives
\begin{equation*}
\lim_{t\downarrow s}\eta_{s,t}({\bf z})
=\lim_{t\downarrow s}\frac1{t-s}\int_s^t \omega_r({\bf x}_r,\tau_r)\,{\rm d}r
=\omega_s({\bf x},\tau).
\end{equation*}

For (iii), fix $s<u<t$ and write ${\bf z}_u= F_{s,u}({\bf z})$.
By \Cref{eq:state_displacements}, the tokenwise time components of the direct map $F_{s,t}({\bf z})$ and the composed map $F_{u,t}({\bf z}_u)$ are
\begin{equation*}
\tau+(t-s)\,\eta_{s,t}({\bf z})
\qquad\text{and}\qquad
\tau+(u-s)\,\eta_{s,u}({\bf z})+(t-u)\,\eta_{u,t}({\bf z}_u),
\end{equation*}
which agree if and only if the identity for $\eta$ in \Cref{eq:state_denoiser_composition} holds.
When this identity holds, let $\tau_t$ denote the common tokenwise times, and write
\begin{equation*}
a_i=\tau^i,\qquad b_i=\tau_u^i,\qquad c_i=\tau_t^i.
\end{equation*}
Applying (i) to $[s,u]$ and then to $[u,t]$ gives the token state component of the composed map,
\begin{align}
&\frac{1-c_i}{1-b_i}
\left[\frac{1-b_i}{1-a_i}{\bf x}^i+\frac{b_i-a_i}{1-a_i}\delta_{s,u}^i({\bf z})\right]
+\frac{c_i-b_i}{1-b_i}\delta_{u,t}^i({\bf z}_u)\nonumber\\
&=\frac{1-c_i}{1-a_i}{\bf x}^i
+\frac{c_i-a_i}{1-a_i}
\bigl[\gamma_i\delta_{s,u}^i({\bf z})+(1-\gamma_i)\delta_{u,t}^i({\bf z}_u)\bigr],
\label{eq:state_composed_transition}
\end{align}
where
\begin{equation*}
\gamma_i=\frac{(1-c_i)(b_i-a_i)}{(1-b_i)(c_i-a_i)},\qquad
1-\gamma_i=\frac{(1-a_i)(c_i-b_i)}{(1-b_i)(c_i-a_i)}.
\end{equation*}
Both are nonnegative since $a_i<b_i<c_i<1$, so $\gamma_i\in[0,1]$.
By (i), the token state component of the direct map is $\frac{1-c_i}{1-a_i}{\bf x}^i+\frac{c_i-a_i}{1-a_i}\delta_{s,t}^i({\bf z})$, so subtracting it from \Cref{eq:state_composed_transition} leaves
\begin{equation*}
\frac{c_i-a_i}{1-a_i}
\bigl[\gamma_i\delta_{s,u}^i({\bf z})+(1-\gamma_i)\delta_{u,t}^i({\bf z}_u)-\delta_{s,t}^i({\bf z})\bigr].
\end{equation*}
Since $c_i>a_i$, the two token state components then agree if and only if the identity for $\delta$ in \Cref{eq:state_denoiser_composition} holds.
Hence the semigroup condition \Cref{eq:state_semigroup_condition} holds if and only if both identities hold, which proves (iii).
\end{proof}

\subsection{Proof of Proposition \ref{prop:state_loss_optimum}}
\label{app:state_loss_optimum}
\stateLossOptimum*

\begin{proof}
Setting $\hat\delta=\delta$ and $\hat\eta=\eta$, the induced map $\hat F$ is the exact flow map $F$ by \Cref{prop:state_two_time_denoiser} (i), and \Cref{prop:state_two_time_denoiser} (ii) gives
\begin{equation*}
\delta_{t,t}({\bf z})=D_\tau({\bf x}),\qquad
\eta_{t,t}({\bf z})=\omega_t({\bf x},\tau),
\end{equation*}
so both diagonal terms vanish.
Moreover, \Cref{prop:state_two_time_denoiser} (iii) gives
\begin{equation*}
\delta_{s,t}({\bf z})=\mathcal T^\delta_{s,u,t}({\bf z}),\qquad
\eta_{s,t}({\bf z})=\mathcal T^\eta_{s,u,t}({\bf z}),
\end{equation*}
so both off-diagonal terms vanish as well, and $\mathcal L(\delta,\eta)=0$.

Conversely, all four terms are nonnegative and $\lambda>0$, so $\mathcal L(\hat\delta,\hat\eta)=0$ if and only if each term vanishes almost surely under the training distribution.
Since KL divergence and the squared norm vanish exactly when their arguments agree, and stop-gradient leaves the target values unchanged, this gives
\begin{equation*}
\hat\delta_{t,t}({\bf z})=D_\tau({\bf x}),\qquad
\hat\eta_{t,t}({\bf z})=\omega_t({\bf x},\tau),\qquad
\hat\delta_{s,t}({\bf z})=\mathcal T^\delta_{s,u,t}({\bf z}),\qquad
\hat\eta_{s,t}({\bf z})=\mathcal T^\eta_{s,u,t}({\bf z}).
\end{equation*}
The first two are \Cref{eq:state_denoiser_diagonal}, and expanding the targets in the last two gives \Cref{eq:state_denoiser_composition}, which hold for $\hat\delta$ and $\hat\eta$.
\end{proof}

\subsection{Proof of Corollary \ref{prop:prescribed_loss_optimum}}
\label{app:prescribed_distillation}
\prescribedLossOptimum*

\begin{proof}
We first show that the tokenwise time terms of \Cref{eq:joint_self_distillation_loss} vanish.
By assumption, $\hat\eta$ is fixed to the exact $\eta$, and since $\tau_t=\tau(t)$ for every state, \Cref{eq:joint_average_velocities} gives
\begin{equation*}
\eta_{s,t}=\frac{\tau(t)-\tau(s)}{t-s},\qquad
\eta_{t,t}=\dot\tau(t)=\omega_t.
\end{equation*}
The diagonal tokenwise time term therefore vanishes.
Substituting $\eta$ into \Cref{eq:state_composition_targets} gives
\begin{equation*}
\mathcal T^\eta_{s,u,t}
=\frac{(u-s)\,\eta_{s,u}+(t-u)\,\eta_{u,t}}{t-s}
=\frac{(\tau(u)-\tau(s))+(\tau(t)-\tau(u))}{t-s}
=\eta_{s,t},
\end{equation*}
so the off-diagonal tokenwise time term vanishes as well.

We next rewrite the token state terms.
Since ${\bf x}_t\sim p_{\tau(t)}$ by \Cref{prop:async_flow}, the states in \Cref{eq:joint_self_distillation_loss} are ${\bf z}=(I_{\tau(t)},\tau(t))$, and $\hat F_{s,u}$ and $\gamma$ in $\mathcal T^\delta_{s,u,t}$ are evaluated at $\tau(s)$, $\tau(u)$, and $\tau(t)$.
The off-diagonal token state term is thus the first term of \Cref{eq:prescribed_self_distillation_loss}.
For the diagonal term, ${\bf x}_{\bf 1}^i$ is one-hot and has zero entropy, so
\begin{equation*}
\mathsf{KL}({\bf x}_{\bf 1}\,\|\,\hat\delta_{t,t}({\bf z}))
=\sum_{i=1}^L\left[-\log\hat\delta^i_{t,t}({\bf z})\right]_{{\bf y}^i},
\end{equation*}
which is the tokenwise cross-entropy of \Cref{eq:df_loss} with $\hat D_{\tau(t)}=\hat\delta_{t,t}$.
Applying the decomposition \Cref{eq:async_ce_decomposition} with $\tau=\tau(t)$ gives
\begin{equation*}
\begin{aligned}
\mathbb E_{t,{\bf z},{\bf x}_{\bf 1}}\,\big[\mathsf{KL}({\bf x}_{\bf 1}\,\|\,\hat\delta_{t,t}({\bf z}))\big]
&=C+\mathbb E_{t,{\bf z}}\,\big[\mathsf{KL}(D_{\tau(t)}(I_{\tau(t)})\,\|\,\hat\delta_{t,t}({\bf z}))\big],\\
C&\coloneqq\mathbb E_{t,{\bf z}}\left[\sum_{i=1}^L H(D_{\tau(t)}^{i}(I_{\tau(t)}))\right],
\end{aligned}
\end{equation*}
where KL sums over token positions and $C$ does not depend on $\hat\delta$.
Hence \Cref{eq:prescribed_self_distillation_loss} equals \Cref{eq:joint_self_distillation_loss} plus $C$.
Since \Cref{eq:joint_self_distillation_loss} is nonnegative and vanishes at the exact $\delta$ by \Cref{prop:state_loss_optimum}, the minimum of \Cref{eq:prescribed_self_distillation_loss} is $C$, and its minimizers are exactly the zero-loss solutions of \Cref{prop:state_loss_optimum}.
\end{proof}

%% file: sections/appendix_schedulers.tex
\section{Scheduler Parameterization}
\subsection{Schedule construction}
\label{app:schedule_construction}
\begin{figure}[t]
    \centering
    \includegraphics[width=0.8\linewidth]{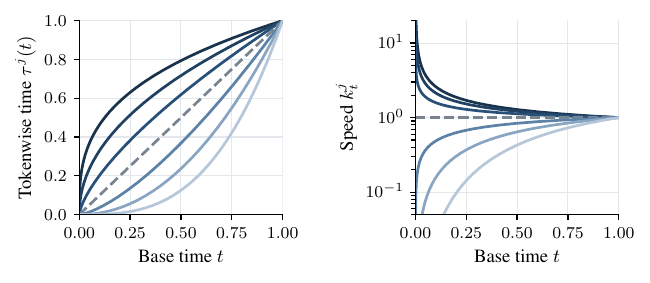}
    \caption{Power schedules $\tau^j(t)=t^{\alpha_j}$ with $\beta=\log 3$ (left) and their speeds $k_t^j$ (right). Darker curves denote higher-priority positions, with exponents ranging from $\alpha_j=1/3$ (darkest) to $\alpha_j=3$ (lightest), and the dashed line denotes the synchronous schedule.}
    \label{fig:power_schedule}
\end{figure}
All schedulers in our experiments take the common form of \Cref{eq:common_schedule_rate},
\begin{equation*}
    \omega_t^i({\bf x},\tau)=\frac{1-\tau^i}{1-t}\,k_t^i({\bf x},\tau), \qquad t<1.
\end{equation*}
This form sends $\tau_t\to\mathbf1$ as $t\uparrow1$ whenever $k^i$ is bounded away from zero near $t=1$.
Writing $k^i_r$ for $k^i_r({\bf x}_r,\tau_r)$ along the trajectory, each tokenwise time satisfies
\begin{equation*}
    \frac{\rm d}{{\rm d}r}\log(1-\tau^i_r)
    =-\frac{\dot\tau^i_r}{1-\tau^i_r}
    =-\frac{k^i_r}{1-r},
\end{equation*}
and integrating over $[s,t]$ gives
\begin{equation}
    1-\tau^i_t
    =(1-\tau^i_s)\exp\!\left(-\int_s^t\frac{k^i_r}{1-r}\,{\rm d}r\right).
    \label{eq:tau_gap}
\end{equation}
Since $k^i_r\geq0$ and $\tau^i_0=0$, the tokenwise time $\tau^i_t$ never decreases and stays in $[0,1)$ for every $t<1$.
Moreover, if $k^i_r\geq c$ for some $c>0$ and all $r\in[s,1)$, then $\int_s^t{\rm d}r/(1-r)=\log\frac{1-s}{1-t}$ gives
\begin{equation*}
    1-\tau^i_t\leq(1-\tau^i_s)\left(\frac{1-t}{1-s}\right)^{c}\to0
    \quad\text{as } t\uparrow1.
\end{equation*}
All schedulers in our experiments satisfy this condition, since the top-$K$ confidence and entropy-bounded schedulers use $k^i\geq k_{\min}>0$, and the power schedules have $k^j_t\to1$ as $t\uparrow1$.

The schedulers differ in what $k^i$ depends on.
Let
\begin{equation}
    p_i=D_\tau^i({\bf x}),\qquad
    c_i=\|p_i\|_\infty,\qquad h_i=H(p_i)
\end{equation}
denote the predicted clean-token distribution, its confidence, and its entropy, respectively. All position indices and statistics below refer to generated tokens, and fixed context tokens remain unchanged.

\paragraph{Power schedules.}\label{app:power_schedule_details}
Several of the schedulers below use the power schedule
\begin{equation}
    \tau^j(t)=t^{\alpha_j}
    \label{eq:power_schedule}
\end{equation}
(\Cref{fig:power_schedule}). Its rate $\omega_t^j=\alpha_j t^{\alpha_j-1}$ takes the common form, with $k$ depending only on $t$,
\begin{equation}
    k_t^j=\omega_t^j\,\frac{1-t}{1-\tau^j(t)}=\alpha_j t^{\alpha_j-1}\,\frac{1-t}{1-t^{\alpha_j}},
\end{equation}
which tends to one as $t\uparrow1$.
Given an order over the generated positions, let $\rho_j\in[0,1]$ denote the normalized rank of position $j$, with zero denoting the highest priority, and set
\begin{equation}
    \alpha_j=\exp\bigl[\beta\,(2\rho_j-1)\bigr],
    \label{eq:power_exponents}
\end{equation}
so that the exponents range from $e^{-\beta}$ to $e^{\beta}$ and positions with higher priority are denoised faster.
Except for the synchronous schedule, we use $\beta=\log 3$, so that the exponents range from $1/3$ to $3$. We use the following power schedules.
\begin{itemize}[leftmargin=*,itemsep=2pt]
    \item \textbf{Synchronous.} Setting $\beta=0$ gives $k^j=1$ and $\tau^j(t)=t$, which recovers the synchronous schedule of \citet{lee2026flowmaplanguagemodels}.
    \item \textbf{Left-to-right.} $\rho_j$ is the normalized position of token $j$ among the generated positions.
    \item \textbf{Algorithmic order for Sudoku.} $\rho_j=r_{\bf c}(j)/R_{\bf c}$, where $r_{\bf c}(j)$ is the round in which cell $j$ is resolved and $R_{\bf c}$ is the last round (\Cref{app:sudoku_order}). Cells resolved in the same round share the same exponent.
    \item \textbf{Initial entropy.} Following \citet{bansal2026token}, $\rho_j$ is the rank of position $j$ by increasing entropy of the initial prediction $D_{\bf 0}({\bf x}_0)$, kept fixed during generation, so that positions with lower entropy are denoised earlier. $k$ thus depends on the token states only through the initial state ${\bf x}_0$.
\end{itemize}

The following two schedulers instead set $k^i$ from the current predictions at every step.

\paragraph{Top-$K$ confidence scheduler.}
Following MaskGIT \citep{chang2022maskgit}, we advance the $K$ most confident positions faster, where $K$ is a fraction $f(t)=1-\cos(\pi t/2)$ of the generated positions and thus increases over time.
Let $\widetilde c_i$ denote the confidence $c_i$ standardized over the generated positions, and let $q(t)$ be its quantile at level $1-f(t)$, so that a fraction $f(t)$ of positions have $\widetilde c_i>q(t)$.
We set
\begin{equation}
    k^i=k_{\min}+A\,\mathrm{sigmoid}\!\left(\frac{\widetilde c_i-q(t)}{T}\right),
    \label{eq:policy_confidence}
\end{equation}
where $k_{\min}>0$ is the speed of the remaining positions, $A>0$ is the additional speed for the selected ones, and $T>0$ is a temperature that controls how sharply $k$ changes around the threshold.

\paragraph{Entropy-bounded scheduler.}
Following EB-Sampler \citep{benhamu2025accelerated},
we order token positions by decreasing confidence or
increasing entropy.
Let $\pi_j$ denote the $j$-th position in this order.
For the first $j$ positions, we sum their entropies
and subtract the largest:
\begin{equation}
    B_j=\sum_{\ell=1}^{j}h_{\pi_\ell}
        -\max_{1\leq\ell\leq j}h_{\pi_\ell}.
\end{equation}
We assign larger $k$ to positions before
$B_j$ exceeds a threshold $\epsilon$, using a sigmoid
to decrease $k$ gradually:
\begin{equation}
    k^{\pi_j}=k_{\min}+A\,
    \mathrm{sigmoid}\!\left(\frac{\epsilon-B_j}{T}\right).
    \label{eq:policy_soft_eb}
\end{equation}
Here $A>0$ controls the additional speed, and $T>0$ controls how sharply it decreases around $\epsilon\geq0$.

\subsection{Tokenwise time rate reparameterization}
\label{app:effective_tokenwise_time_rate}

Taking the logarithm of \Cref{eq:tau_gap} at each position gives
\begin{equation*}
\log\frac{\mathbf1-\tau_s}{\mathbf1-\tau_t}=\int_s^t\frac{k_r({\bf x}_r,\tau_r)}{1-r}\,{\rm d}r,
\qquad
\log\frac{1-s}{1-t}=\int_s^t\frac{1}{1-r}\,{\rm d}r.
\end{equation*}
Dividing the first by the second shows that $\kappa_{s,t}$ defined in \Cref{sec:implementation} is the average of $k_r$ along the trajectory weighted by $1/(1-r)$,
\begin{equation*}
\kappa_{s,t}=\frac{\int_s^t k_r({\bf x}_r,\tau_r)/(1-r)\,{\rm d}r}{\int_s^t 1/(1-r)\,{\rm d}r}.
\end{equation*}
We now derive the relation between $\hat\kappa_{s,s}$ and the diagonal rate $\hat\eta_{s,s}$.
By \Cref{prop:state_two_time_denoiser}(i), the left side of \Cref{eq:joint_clock_parameterization} is the tokenwise time component $\tau_s+(t-s)\,\hat\eta_{s,t}({\bf z}_s)$ of the flow map.
Its derivative in $t$ at $t=s$ is therefore $\hat\eta_{s,s}({\bf z}_s)$.

Writing the power in \Cref{eq:joint_clock_parameterization} as an exponential,
\begin{equation*}
\left(\frac{1-t}{1-s}\right)^{\hat\kappa_{s,t}}
=\exp\!\left[\hat\kappa_{s,t}\log\frac{1-t}{1-s}\right],
\end{equation*}
and differentiating with respect to $t$ by the chain and product rules gives
\begin{equation*}
\partial_t\left(\frac{1-t}{1-s}\right)^{\hat\kappa_{s,t}}
=\left(\frac{1-t}{1-s}\right)^{\hat\kappa_{s,t}}
\odot\left[\partial_t\hat\kappa_{s,t}\log\frac{1-t}{1-s}-\frac{\hat\kappa_{s,t}}{1-t}\right].
\end{equation*}
At $t=s$, we have $\frac{1-t}{1-s}=1$ and $\log\frac{1-t}{1-s}=0$, so the term involving $\partial_t\hat\kappa_{s,t}$ vanishes and
\begin{equation*}
\left.\partial_t\left(\frac{1-t}{1-s}\right)^{\hat\kappa_{s,t}}\right|_{t=s}
=-\frac{\hat\kappa_{s,s}}{1-s}.
\end{equation*}
Differentiating both sides of \Cref{eq:joint_clock_parameterization} at $t=s$ therefore gives
\begin{equation*}
\hat\eta_{s,s}({\bf z}_s)
=-(\mathbf1-\tau_s)\odot\left(-\frac{\hat\kappa_{s,s}({\bf z}_s)}{1-s}\right)
=\frac{\mathbf1-\tau_s}{1-s}\odot\hat\kappa_{s,s}({\bf z}_s).
\end{equation*}
On the other hand, the scheduler of \Cref{eq:common_schedule_rate} at the same state is
\begin{equation*}
\omega_s({\bf x}_s,\tau_s)=\frac{\mathbf1-\tau_s}{1-s}\odot k_s({\bf x}_s,\tau_s).
\end{equation*}
Subtracting the two expressions gives
\begin{equation}
\hat\eta_{s,s}({\bf z}_s)-\omega_s({\bf x}_s,\tau_s)
=\frac{\mathbf1-\tau_s}{1-s}\odot\bigl(\hat\kappa_{s,s}({\bf z}_s)-k_s({\bf x}_s,\tau_s)\bigr).
\label{eq:effective_clock_tangent}
\end{equation}
Since every entry of $(\mathbf1-\tau_s)/(1-s)$ is positive for $\tau_s<\mathbf1$ and $s<1$, the diagonal identity $\hat\eta_{s,s}=\omega_s$ holds if and only if $\hat\kappa_{s,s}=k_s$.
Moreover, taking the squared norm of \Cref{eq:effective_clock_tangent} gives
\begin{equation*}
\|\hat\eta_{s,s}({\bf z}_s)-\omega_s({\bf x}_s,\tau_s)\|_2^2
=\sum_{i=1}^L\left(\frac{1-\tau_s^i}{1-s}\right)^2\bigl(\hat\kappa^i_{s,s}({\bf z}_s)-k^i_s({\bf x}_s,\tau_s)\bigr)^2,
\end{equation*}
so the implemented term $\lambda\|k_s-\hat\kappa_{s,s}\|_2^2$ only removes these positive weights, and both terms vanish exactly when $\hat\kappa_{s,s}=k_s$.

%% file: sections/appendix_experiments.tex
\section{Experiment Details}
\label{app:exp_details}

\begin{table}[t]
    \centering
    \caption{\textbf{Per-task hyperparameters.} All models on the same task share the architecture and optimizer, and differ only in the training and inference schedules.}
    \label{tab:hyperparameters}
    \small
    \begin{tabular}{@{}lcc@{}}
        \toprule
        & TinyGSM / GSM8K & Sudoku \\
        \midrule
        \multicolumn{3}{@{}l}{\textit{Architecture (DiT)}} \\
        Hidden size & 768 & 512 \\
        adaLN cond.\ dim & 128 & 128 \\
        Layers & 12 & 8 \\
        Attention heads & 12 & 8 \\
        FFN multiplier & 4 & 4 \\
        Dropout & 0.1 & 0.1 \\
        Sequence length $L$ & 512 & 180 \\
        Vocabulary $|V|$ & 49152 (SmolLM-135M) & 12 \\
        Parameters (approx.) & 168M & 29M \\
        \midrule
        \multicolumn{3}{@{}l}{\textit{Teacher training}} \\
        Optimizer & \multicolumn{2}{c}{AdamW ($\beta_1{=}0.9$, $\beta_2{=}0.999$, $\epsilon{=}10^{-8}$)} \\
        Weight decay & \multicolumn{2}{c}{0} \\
        Gradient clipping & \multicolumn{2}{c}{1.0} \\
        Schedule & \multicolumn{2}{c}{Constant w/ linear warmup} \\
        Warmup steps & \multicolumn{2}{c}{2\,500} \\
        Peak LR & $3\times10^{-4}$ & $3\times10^{-4}$ \\
        Total steps & 80,000 & 10,000 \\
        Global batch size & 512 & 256 \\
        Precision & \multicolumn{2}{c}{bfloat16 mixed} \\
        EMA decay & \multicolumn{2}{c}{0.9999} \\
        \midrule
        \multicolumn{3}{@{}l}{\textit{Distillation}} \\
        Total steps & 170,000 & 10,000 \\
        Peak LR & $3\times10^{-4}$ & $3\times10^{-4}$ \\
        Global batch size & 512 & 256 \\
        Training steps per curriculum round & 2,000 & 2,000 \\
        Initial rollout steps & 32 & 32 \\
        Noise scale $\sigma$ & 0.1 & 1.0 \\
        \bottomrule
    \end{tabular}
\end{table}

\subsection{Training details}
\label{app:hyperparameters}
\paragraph{Architecture and hyperparameters.} 
All models on the same task share the architecture, and the tokenwise time $\tau\in[0,1]^L$ is embedded per position for the \emph{a}FLM teacher and \ours{}.
Per-task hyperparameters are listed in \Cref{tab:hyperparameters}.
We train all models with AdamW~\citep{loshchilov2017decoupled} with $\beta_1=0.9$, $\beta_2=0.999$, $\epsilon=10^{-8}$, and no weight decay.
The learning rate is linearly warmed up to $3\times10^{-4}$ over 2,500 steps and kept constant afterwards, and gradients are clipped to norm 1.0.
We use bfloat16 mixed precision and keep an exponential moving average of the weights with decay 0.9999, which is used for evaluation. All models are trained on NVIDIA A100 and H200 GPUs.

\paragraph{\emph{a}FLM training.}
For each task, we train one \emph{a}FLM teacher with the diffusion forcing loss of \Cref{eq:df_loss} for the number of steps in \Cref{tab:hyperparameters}.
The teacher is initialized from MDLM (\Cref{sec:implementation}) unless stated otherwise. We use the \emph{a}FLM checkpoint at 80k steps, and \Cref{fig:mdlm_init_training} continues training up to 240k steps only to compare the two initializations.
For Alg.\ order$^\dagger$ in \Cref{fig:scheduled_inference}, we instead train the denoiser only along the algorithmic order of \Cref{app:sudoku_order}.

\paragraph{Distillation.}
\ours{} (EB), \ours{} (Top-$K$), and \ours{} (Sync.) are initialized from the \emph{a}FLM teacher and distilled with \Cref{eq:joint_self_distillation_loss}, while \ours{} (Alg.\ order) is trained from scratch with \Cref{eq:prescribed_self_distillation_loss} without a teacher.
Following~\citet{lee2026flowmaplanguagemodels}, we sample three times $s<u<t$ by first sampling $d=t-s\sim\mathsf{U}[0,1]$, then sampling $s\sim\mathsf{U}[0,1-d]$, and setting $u=(s+t)/2$.
With learned schedulers, the state at time $s$ is obtained by rolling out the dynamics from time $0$ to $s$ (\Cref{sec:implementation}).
We run this rollout curriculum for five rounds, where the first round rolls out the teacher with 32 steps and each later round rolls out the current flow map with half as many steps, down to 2 steps.
The final round with 2 rollout steps continues for the remaining training steps.
We set the tokenwise time loss weight in \Cref{eq:joint_self_distillation_loss} to $\lambda=1.0$.
For noise downscaling (\Cref{sec:implementation}), we draw the initial noise with $\sigma=0.1$ on GSM8K both in distillation and at inference, and keep $\sigma=1.0$ on Sudoku.

\paragraph{Scheduler hyperparameters.} For the top-$K$ confidence scheduler, we use $(T,A)=(0.2,12)$ on all Sudoku difficulty levels and $(0.5,48)$ on GSM8K, with $k_{\min}=0.1$.
For the entropy-bounded scheduler, we use $(\epsilon,T,A)=(0.01,0.001,12)$ on all Sudoku difficulty levels and $(3,0.3,24)$ on GSM8K, with $k_{\min}=0.1$ throughout. For the prescribed power schedules reported here, we use $\beta=\log 3$, giving exponents in $[1/3,3]$, except for the synchronous schedule, which uses $\beta=0$.
For the initial entropy scheduler, we rank the positions by the entropy of the initial prediction and assign the power schedule with $\beta=\log 3$ according to this rank. 

\paragraph{Evaluation.}
All models are sampled on a uniform grid of the base time $t$ with the given number of steps.
On this grid, the \emph{a}FLM teachers integrate \Cref{eq:state_dependent_dynamics}, \ours{} jumps between adjacent grid points with $\hat F$, and the tokenwise times evolve according to the scheduler.
We evaluate on the full 1,319 GSM8K test questions and 2,000 Sudoku puzzles for each difficulty.
For Sudoku, we report exact solving accuracy, and for GSM8K, we execute the generated code and report the accuracy of its output.
We measure the training cost in \Cref{tab:training_cost} on a single H200 GPU.

\subsection{Baseline details}
\label{app:baseline_details}
For \Cref{tab:main_results_split}, we take the one-step results reported in each paper and otherwise evaluate the checkpoints released by \citet{deschenaux2026language} and \citet{agarwal2026posteriorrefinementfastlanguage}.
For FMLM+, we report the GSM8K result with MDLM initialization.
For DBTM, we report its only one-step GSM8K result, which uses the refinement-in-loop objective.
For AlphaDLM, we report its best one-step result, which the sequence-level objective with its hyperparameter $k=16$ achieves.
Since DBTM and AlphaDLM report no one-step results on Sudoku and release no checkpoints, we mark them with --.

\begin{figure}[t]
    \centering
    \includegraphics[width=\linewidth]{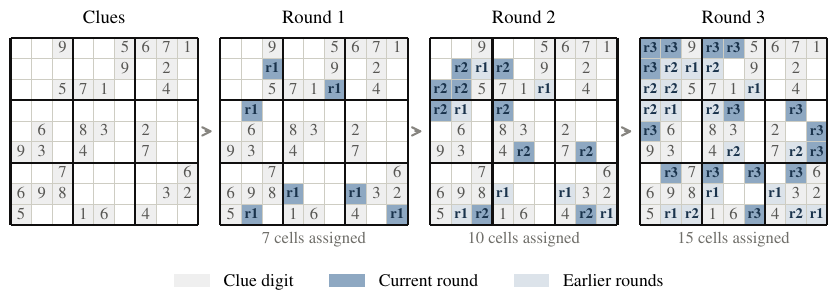}
    \caption{Algorithmic ordering for a hard Sudoku puzzle. Plain digits are the given clues, and labels \texttt{r1}--\texttt{r3} indicate the round in which each empty cell is resolved. Cells resolved in the current round are shaded in darker blue, while previously assigned rounds remain visible. Cells still blank after round 3 are resolved in later rounds (not shown).}
    \label{fig:sudoku_order_example}
\end{figure}
\subsection{Algorithmic ordering for Sudoku}
\label{app:sudoku_order}

Following \citet{shah2024causal}, we start from the given clues ${\bf c}$ and compute the valid digits of each empty cell under the row, column, and box constraints.
At each round, we fill every cell that has only one valid digit or is the only valid location of a digit in its row, column, or box, and we repeat until no cell can be resolved.
\Cref{fig:sudoku_order_example} illustrates the first three rounds.
We denote by $r_{\bf c}(j)$ the round in which cell $j$ is resolved and assign the last round $R_{\bf c}$ to unresolved cells, with $R_{\bf c}=1$ if no cell is resolved.

\subsection{Details of the analysis in \texorpdfstring{\Cref{sec:exp_internalization}}{Section 5.3}}
\label{app:logit_lens}
We apply the logit lens after the attention and MLP sublayers of each transformer block, decoding each hidden state with the final layer norm and output head of the model.
For each token, we take the earliest sublayer from which its decoded prediction matches the final output and remains so.
For the \emph{a}FLM teacher, we run 128-step EB sampling from the same initial noise and regard a token as decided at the first step where its tokenwise time exceeds 0.95.
The four columns of \Cref{fig:exp_internalization} show the first 10\%, 50\%, 85\%, and 100\% of the tokens in this order, and \Cref{tab:fig5_blocks} lists the block of each \ours{} column.

\begin{table}[t]
    \centering
    \caption{Transformer block (out of 12) at which each column of \Cref{fig:exp_internalization} is taken.}
    \label{tab:fig5_blocks}
    \begin{tabular}{@{}lcccc@{}}
        \toprule
        Model & 10\% & 50\% & 85\% & 100\% \\
        \midrule
        \ours{} (EB) & 6 & 8 & 11 & 12 \\
        \ours{} (Sync.) & 2 & 7 & 10 & 12 \\
        \bottomrule
    \end{tabular}
\end{table}